\definecolor{lightgreen}{RGB}{224,238,224}
\definecolor{lightred}{RGB}{251,220,220}
\definecolor{lightyellow}{RGB}{255,250,205}
\definecolor{appendixblue}{RGB}{0,0,255}  
\newtheorem{theorem}{Theorem}
\newtheorem{remark}{Remark}
\newtheorem{definition}[theorem]{Definition}
\newtheorem{lemma}[theorem]{Lemma}
\title{Boosting Alignment for Post-Unlearning Text-to-Image Generative Models}
\author{%
  Myeongseob Ko$^*$ \\
  Virginia Tech \\
  \texttt{myeongseob@vt.edu} \\
  \And
  Henry Li$^*$ \\
  Yale University \\
  \texttt{henry.li@yale.edu} \\
  \And
  Zhun Wang \\
  University of California, Berkeley \\
  \texttt{zhun.wang@berkeley.edu} \\
  \And
  Jonathan Patsenker \\
  Yale University \\
  \texttt{jonathan.patsenker@yale.edu} \\
  \And
  Jiachen T. Wang \\
  Princeton University \\
  \texttt{tianhaowang@princeton.edu} \\
  \And
  Qinbin Li \\
  University of California, Berkeley \\
  \texttt{liqinbin1998@gmail.com} \\
  \And
  Ming Jin \\
  Virginia Tech \\
  \texttt{jinming@vt.edu} \\
  \And
  Dawn Song \\
  University of California, Berkeley \\
  \texttt{dawnsong@berkeley.edu} \\
  \And
  Ruoxi Jia \\
  Virginia Tech \\
  \texttt{ruoxijia@vt.edu}
}
\begin{document}

\maketitle
\def\thefootnote{*}\footnotetext{Equal contributions}

\begin{abstract}

  Large-scale generative models have shown impressive image-generation capabilities, propelled by massive data. However, this often inadvertently leads to the generation of harmful or inappropriate content and raises copyright concerns. Driven by these concerns, machine unlearning has become crucial to effectively purge undesirable knowledge from models. While existing literature has studied various unlearning techniques, these often suffer from either poor unlearning quality or degradation in text-image alignment after unlearning, due to the competitive nature of these objectives. To address these challenges, we propose a framework that seeks an optimal model update at each unlearning iteration, ensuring monotonic improvement on both objectives. We further derive the characterization of such an update.
  In addition, we design procedures to strategically diversify the unlearning and remaining datasets to boost performance improvement. Our evaluation demonstrates that our method effectively removes target classes from recent diffusion-based generative models and concepts from stable diffusion models while maintaining close alignment with the models' original trained states, thus outperforming state-of-the-art baselines. Our code will be made available at \url{https://github.com/reds-lab/Restricted_gradient_diversity_unlearning.git}.

\end{abstract}

\section{Introduction}
\label{introducation}

Large-scale text-to-image generative models have recently gained considerable attention for their impressive image-generation capabilities. Despite being at the height of their popularity, these models, trained on vast amounts of public data, inevitably face concerns related to harmful content generation~\citep{heng2024selective} and copyright infringement~\citep{zhang2023copyright}. Although exact machine unlearning---retraining the model by excluding target data---is a direct solution, its computational challenge has driven continued research on approximate machine unlearning.

To address this challenge, recent studies~\citep{fan2023salun,gandikota2023erasing,heng2024selective}, have introduced approximate unlearning techniques aimed at boosting efficiency while preserving effectiveness. These approaches have successfully demonstrated the ability to remove target concepts while maintaining the model's general image generation capabilities, with generation quality assessed using the Fréchet Inception Distance. 
However, these studies generally overlook the impact of unlearning on image-text alignment, which pertains to the semantic accuracy between generated images and their text descriptions~\citep{lee2024holistic}. While pretrained generative models generally demonstrate high alignment scores, our study reveals a critical gap: state-of-the-art unlearning techniques fall short in achieving comparable text-image alignment scores after unlearning, as illustrated in Figure~\ref{fig:alignment}. This could lead to potentially problematic behaviors in real-world deployments, necessitating further investigation.

\begin{figure}
  \centering
  \includegraphics[width=0.85\columnwidth]{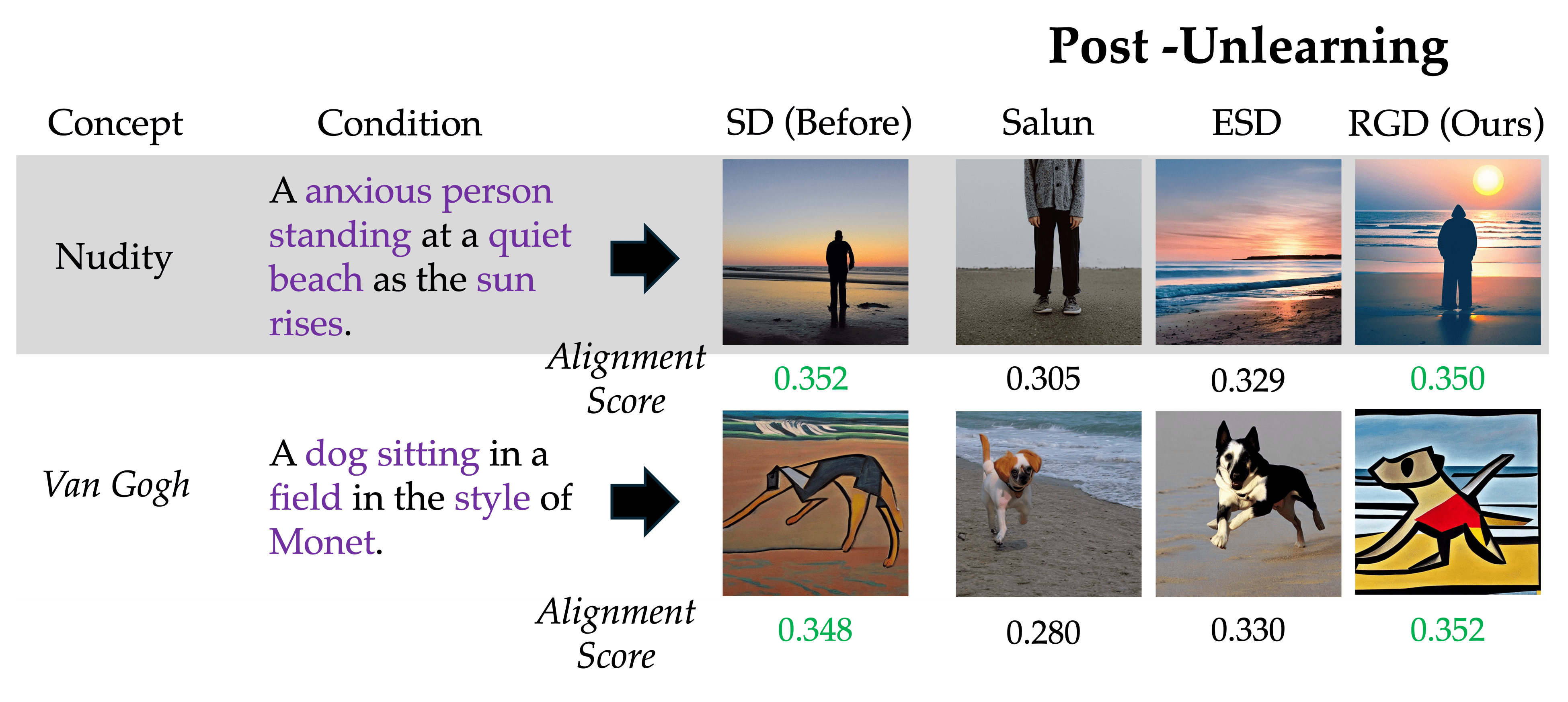}
  \caption{Generated images using \texttt{SalUn}~\citep{fan2023salun}, \texttt{ESD}~\citep{gandikota2023erasing}, and Ours after unlearning given the condition. Each row indicates different unlearning tasks: nudity removal, and \emph{Van Gogh} style removal. Generated images from our approach and \texttt{SD}~\citep{rombach2022high} are well-aligned with the prompt, whereas \texttt{SalUn} and \texttt{ESD} fail to generate semantically correct images given the condition. On average, across 100 different prompts, \texttt{SalUn} shows the lowest clip alignment scores (0.305 for nudity removal and 0.280 for \emph{Van Gogh} style removal), followed by \texttt{ESD} (0.329 and 0.330, respectively). Our approach achieves scores of 0.350 and 0.352 for these tasks, closely matching the original \texttt{SD} scores of 0.352 and 0.348.}
  \label{fig:alignment}
\end{figure}


We attribute the failure of existing techniques to maintain text-image alignment to two primary factors. Firstly, the unlearning objective often conflicts with the goal of maintaining low loss on the retained data, illustrating the competitive nature of these two objectives. Traditionally, approaches to optimizing these objectives have simply aggregated the gradients from both; however, this method of updating the model typically advances one objective at the expense of the other. Hence, while these approaches may successfully remove target concepts, they often compromise text-image alignment for retained concepts in the process. Secondly, current methods employ a simplistic approach to constructing a dataset for loss minimization on retained concepts. For example, in~\cite{fan2023salun}, this dataset is composed of images generated from a single prompt associated with the concept to be removed. This lack of diversity in the dataset might lead to overfitting, which in turn hampers the text-image alignment.

To address these issues, we propose a principled framework designed to optimally balance the objectives of unlearning the target data and maintaining performance on the remaining data at each update iteration. Specifically, we introduce the concept of the \emph{restricted gradient}, which allows for the optimization of both objectives while ensuring monotonic improvement in each objective. Furthermore, we have developed a deliberate procedure to enhance data diversity, preventing the model from overfitting to the limited samples in the remaining dataset. To the best of our knowledge, the strategic design of the forgetting target and remaining sets has not been extensively explored in the existing machine unlearning literature. In our evaluation, we demonstrate the improvement in both forgetting quality and alignment on the remaining data, compared to baselines. For example,
our evaluation in nudity removal demonstrates that our method effectively reduces the number of detected body parts to zero, compared to 598 with the baseline stable diffusion (\texttt{SD})~\citep{rombach2022high}, 48 with erased stable diffusion (\texttt{ESD-u}), and 3 with saliency map-based unlearning (\texttt{SalUn})~\citep{fan2023salun}. Particularly, while achieving this effective erasing performance, our method reduces the alignment gap to \texttt{SD} by 11x compared to \texttt{ESD-u} and by 20x compared to \texttt{SalUn} on the retained test set.

\section{Related Work}
\label{relatedwork}

\subsection{Machine Unlearning}
Machine unlearning has primarily been propelled by the "Right to be Forgotten" (RTBF), which upholds the right of users to request the deletion of their data. Given that large-scale models are often trained on web-scraped public data, this becomes a critical consideration for model developers to avoid the need for retraining models with each individual request. In addition to RTBF, recent concerns related to copyrights and harmful content generation further underscore the necessity and importance of in-depth research in machine unlearning. The principal challenge in this field lies in effectively erasing the target concept from pre-trained models while maintaining performance on other data. Recent studies have explored various approaches to unlearning, including the exact unlearning method~\citep{bourtoule2021machine} and approximate methods such as using negative gradients, fine-tuning without the forget data, editing the entire parameter space of the model~\citep{golatkar2020eternal}. To encourage the targeted impact in the parameter space, ~\citep{golatkar2020eternal, foster2024fast} proposed leveraging the Fisher information matrix, and~\citep{fan2023salun} leveraged a gradient-based weight saliency map to identify crucial neurons, thus minimizing the impact on remaining neurons. Furthermore, data-influence-based debiasing and unlearning have also been proposed ~\citep{chen2024fast, bae2023gradient}. Another line of work leverages mathematical tools in differential privacy ~\citep{guo2019certified, chien2024langevin} to ensure that the model's behavior remains indistinguishable between the retrained and unlearned models.

\subsection{Machine Unlearning in Diffusion Models}
Recent advancements in text-conditioned generative models ~\citep{ho2022classifier,rombach2022high}, trained on extensive web-scraped datasets like LAION-5B~\citep{schuhmann2022laion}, have raised significant concerns about the generation of harmful content and copyright violations. A series of studies have addressed the challenge of machine unlearning in diffusion models~\citep{heng2024selective, gandikota2023erasing, zhang2023forget, fan2023salun}. One approach~\citep{heng2024selective} interprets machine unlearning as a continual learning problem, showing effective removal results in classification tasks by employing Bayesian approaches to continual learning~\citep{kirkpatrick2017overcoming}, which enhance unlearning quality while maintaining model performance using generative reply ~\citep{shin2017continual}. However, this approach falls short in removing concepts such as nudity compared to other methods~\citep{gandikota2023erasing}. Another proposed method~\citep{gandikota2023erasing} guides the pre-trained model toward a prior distribution for the targeted concept but struggles to preserve performance. The most recent work~\citep{fan2023salun} proposes selectively damaging neurons based on a saliency map and random labeling techniques, although this method tends to overlook the quality of the remaining set, focusing on improving the forgetting quality, which does not fully address the primary challenges in the machine unlearning community. Although \citep{bae2023gradient} presents a similar multi-task learning framework for variational autoencoders, their work does not show the optimality of their solution, and their experiments mainly focus on small-scale models, due to the computational expense associated with influence functions.


\section{Our Approach}  
\label{proposedmethod}
We study the efficacy of our approach in unlearning by removing target classes from class-conditional diffusion models or eliminating specific concepts from text-to-image models while maintaining their general generation capabilities.
We will call the set of data points to be removed as the \emph{forgetting dataset}.
To set up the notations, let $D$ denote the training set and $D_f\subset D$ be the forgetting dataset. We will use $D_r=D\setminus D_f$ to denote the \emph{remaining dataset}. Our approach only assumes access to some representative points for $D_f$ and $D_r$. As discussed later, depending on specific applications, these data points can be either directly sampled from $D_f$ and $D_r$ or generated based on the high-level concept of $D_f$ to be removed. With a slight abuse of notation, we will use $D_r$ and $D_f$ to also denote the actual representative samples used to operationalize our proposed approach. Furthermore, we denote the model parameter by $\theta$. Let $l$ be a proper learning loss function. The loss of remaining data and that of forgettng data are represented by $\mathcal{L}_r(\theta):=\sum_{z\in D_r} l(\theta,z)$ and $\mathcal{L}_f(\theta):= - \lambda \sum_{z\in D_f} l(\theta,z)$, respectively, where $\lambda$ is a weight adjusting the importance of forgetting loss relative to the remaining data loss. We term $\mathcal{L}_r$ and $\mathcal{L}_f$ \emph{remaining loss} and \emph{forgetting loss}, respectively. We note that in the context of diffusion models, loss function $l$ is defined as 
$ l = \mathbb{E}_{t, x_0, \epsilon \sim \mathcal{N}(0,1)} \left[ \lVert \epsilon - e_\theta(x_t, t) \rVert ^2 \right] $, where $x_t$ is a noisy version of $x_0$ generated by adding Gaussian noise to the clean image $x_0 \sim p_{\text{data}}(x)$ at time step $t$ with a noise scheduler, and $e_\theta(x_t, t)$ is the model's estimate of the added noise $\epsilon$ at time $t$~\citep{xu2023semi, ho2020denoising}. For text-to-image generative models, the loss function $l$ is specified as $ l = \mathbb{E}_{t,q_0,c,\epsilon} \left[ \| \epsilon - \epsilon_\theta(q_t, t, \eta) \|^2 \right] $, where $q_0$ is an encoded latent $q_0=\mathcal{E}(x_0)$  with encoder $\mathcal{E} $, and $q_t$ is a noisy latent at time step $t$. The noise prediction $ \epsilon_\theta(q_t, t, \eta) $ is conditioned on the timestep $ t $ and a text $\eta$. 

\paragraph{Optimizing the Update.} 
Similar to existing work~\cite{fan2023salun}, our objective is to find an unlearned model with parameters $\theta_u$, starting from a pre-trained model with weights $\theta_0$, such that the model forgets the target concepts in $D_f$ while maintaining its utility on the remaining dataset $D_r$. Formally, we aim to maximize the forget error on $D_f$, represented by $\mathcal{L}_f(\theta)$, while minimizing the retain error on $D_r$, represented by $\mathcal{L}r(\theta)$. This can be formulated as $\min_\theta \mathcal{L}_r(\theta) + \mathcal{L}_f(\theta)$, where our approach applies iterative updates to achieve both goals simultaneously.
A simple approach to optimize this objective, often adopted by existing work, is to calculate the gradient $\nabla \mathcal{L}_r(\theta) + \nabla \mathcal{L}_f(\theta)$ and use it to update the model parameters at each iteration. However, empirically, we observe that the two gradients usually conflict with each other, i.e., the decrease of one objective is at the cost of increasing the other; therefore, in practice, this approach yields a significant tradeoff between forgetting strength and model utility on the remaining data. In this work, we aim to present a principled approach to designing the update direction at each iteration that more effectively handles the tradeoff between forgetting strength and model utility on the remaining data. Our key idea is to identify a direction that achieves a monotonic decrease of both objectives.

To describe our algorithm, we briefly review the directional derivative.

\begin{definition}[Directional Derivative]
    The directional derivative \citep{spivak2006calculus} of a function $\mathbf{f}$ at $\mathbf{x}$ in the direction of $\mathbf{v}$ is written as
    \begin{equation}
        D_\mathbf{v} \mathbf{f}(\mathbf{x}) = \lim_{h \rightarrow 0} \frac{\mathbf{f}(\mathbf{x} + h\mathbf{v})}{h}.
    \end{equation}
\end{definition}

This special form of the derivative has the useful property that its maximizer can be related to the gradient $\nabla_{\mathbf{x}} \mathbf{f}(\mathbf{x})$, which we formally state below.

\begin{theorem}[Directional derivative maximizer is the gradient]
    \label{thm:maximizer_of_dd_is_gradient}
    Let $\mathbf{f}$ be a function on $\mathbf{x}$. Then the maximum value of the directional derivative of $\mathbf{f}$ at $\mathbf{x}$ is $|\nabla \mathbf{f}(\mathbf{x})|$ the $\ell^2$ norm of its gradient. Moreover, the direction $\mathbf{v}$ is the gradient itself, i.e.,
    \begin{equation}
        \underset{\mathbf{v}}{\arg \max} \; D_\mathbf{v} \mathbf{f} = \nabla \mathbf{f}(\mathbf{x}).
    \end{equation}
\end{theorem}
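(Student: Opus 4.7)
The plan is to reduce the directional derivative to an inner product and then apply the Cauchy--Schwarz inequality. First, I would note that for a differentiable $\mathbf{f}$, a first-order Taylor expansion gives
\begin{equation}
    \mathbf{f}(\mathbf{x} + h\mathbf{v}) = \mathbf{f}(\mathbf{x}) + h \, \nabla \mathbf{f}(\mathbf{x}) \cdot \mathbf{v} + o(h),
\end{equation}
so under the standard interpretation of the directional derivative (where $\mathbf{f}(\mathbf{x})$ is subtracted in the numerator) one obtains the identity $D_\mathbf{v}\mathbf{f}(\mathbf{x}) = \nabla \mathbf{f}(\mathbf{x}) \cdot \mathbf{v}$. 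This is the single key computation, and I would state it as a short lemma or observation before the main argument.

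Next, I would maximize $\nabla \mathbf{f}(\mathbf{x}) \cdot \mathbf{v}$ over unit vectors $\mathbf{v}$. By Cauchy--Schwarz,
\begin{equation}
    \nabla \mathbf{f}(\mathbf{x}) \cdot \mathbf{v} \;\leq\; \lvert \nabla \mathbf{f}(\mathbf{x}) \rvert \, \lvert \mathbf{v} \rvert \;=\; \lvert \nabla \mathbf{f}(\mathbf{x}) \rvert,
\end{equation}
with equality precisely when $\mathbf{v}$ is a nonnegative scalar multiple of $\nabla \mathbf{f}(\mathbf{x})$. Restricted to unit vectors, the unique maximizer is $\mathbf{v}^\star = \nabla \mathbf{f}(\mathbf{x}) / \lvert \nabla \mathbf{f}(\mathbf{x}) \rvert$, and plugging back in gives the maximum value $\lvert \nabla \mathbf{f}(\mathbf{x}) \rvert$. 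To recover the ``the direction is the gradient itself'' phrasing in the theorem, I would then remark that $\mathbf{v}^\star$ and $\nabla \mathbf{f}(\mathbf{x})$ point in the same direction, so up to normalization the optimal direction is the gradient.

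The main obstacle here is not a mathematical one but an interpretive one: as written, the definition of $D_\mathbf{v}\mathbf{f}$ appears to omit the $-\mathbf{f}(\mathbf{x})$ term in the numerator, and the optimization on the right-hand side is over $\mathbf{v}$ without a norm constraint (without which the supremum is either $0$ or $+\infty$ by scaling). I would therefore open the proof by fixing the standard interpretation, namely $D_\mathbf{v}\mathbf{f}(\mathbf{x}) := \lim_{h \to 0} (\mathbf{f}(\mathbf{x}+h\mathbf{v}) - \mathbf{f}(\mathbf{x}))/h$ with $\mathbf{v}$ ranging over unit vectors, and then the proof proceeds in the two short steps above. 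Everything downstream in the paper (identifying a descent or ascent direction for the unlearning objectives) only uses this standard interpretation, so the fix is harmless.
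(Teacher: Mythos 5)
Your proposal is correct, and it is worth noting that the paper itself offers no proof of this theorem: it is stated as a standard calculus fact (the definition cites Spivak), and the appendix only proves Theorem~\ref{thm:restricted_optimality_of_grad_surg}. Your argument --- identifying $D_\mathbf{v}\mathbf{f}(\mathbf{x}) = \nabla\mathbf{f}(\mathbf{x})\cdot\mathbf{v}$ via the first-order expansion and then applying Cauchy--Schwarz over unit vectors --- is the canonical proof, so there is nothing to compare it against except the textbook treatment it reproduces. You are also right to flag the two defects in the statement as written: the definition omits the $-\mathbf{f}(\mathbf{x})$ term in the numerator (without which the limit diverges whenever $\mathbf{f}(\mathbf{x})\neq 0$), and the $\arg\max$ over unconstrained $\mathbf{v}$ is ill-posed since $D_{c\mathbf{v}}\mathbf{f} = c\,D_\mathbf{v}\mathbf{f}$ makes the supremum infinite; your repairs (subtract $\mathbf{f}(\mathbf{x})$, restrict to $\|\mathbf{v}\|=1$, and read ``the direction is the gradient'' as ``the maximizer is $\nabla\mathbf{f}(\mathbf{x})/\|\nabla\mathbf{f}(\mathbf{x})\|$'') are exactly the ones needed, and they are consistent with how the directional derivative is actually used later in the paper's proof of Theorem~\ref{thm:restricted_optimality_of_grad_surg}.
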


In unlearning, we are specifically interested in the gradient of two losses, the forgetting loss $\mathcal{L}_f$ and the remaining loss $\mathcal{L}_r$. Moreover, we seek gradient directions that simultaneously improve on both. This motivates the \textbf{\emph{restricted gradient}}, which we define below.


\begin{definition}[Restricted gradient, local form for minimization]
  The negative restricted gradient of two losses 
  \(\mathcal{L}_\alpha,\,\mathcal{L}_\beta\) is any direction 
  \(\mathbf{v}\) at \(\boldsymbol{\theta}\) satisfying
  \[
    \underset{\mathbf{v}}{\min}
    \;
    D_{\mathbf{v}}\bigl(\mathcal{L}_\alpha + \mathcal{L}_\beta\bigr)(\boldsymbol{\theta})
    \quad
    \text{s.t.}
    \quad
    D_{\mathbf{v}}\mathcal{L}_\alpha(\boldsymbol{\theta}) \;\;\leq\; 0,
    \quad
    D_{\mathbf{v}}\mathcal{L}_\beta(\boldsymbol{\theta}) \;\;\leq\; 0.
  \]
\end{definition}

Intuitively, with the restricted gradient we seek to define the ideal direction for unlearning. We would like to optimize the joint loss $\mathcal{L} = \mathcal{L}_r + \mathcal{L}_f$
subject to the condition that at every parameter update step, $\mathcal{L}_r$ and $\mathcal{L}_f$ experience monotonic improvement.
This is precisely the step prescribed by the \emph{negative} restricted gradient. Since the learning rates used to fine-tune the parameters in the unlearning process are typically quite small, we can approximate the updated loss at each iteration via a simple first-order Taylor expansion. In this case, the restricted gradient takes a simple form.



\begin{theorem}[Characterizing the restricted gradient under linear approximation]
\label{thm:restricted_optimality_of_grad_surg} 
Given $\theta$, suppose that $\mathcal{L}_r(\theta+\delta) -\mathcal{L}_r(\theta)\approx \delta\cdot \nabla \mathcal{L}_r $ and $\mathcal{L}_f(\theta+\delta)-\mathcal{L}_f(\theta)\approx \delta\cdot \nabla \mathcal{L}_f$. The restricted gradient can be written as
\begin{equation}
    \underset{\mathbf{v}}{\arg \min} \; D_\mathbf{v} (\mathcal{L}_f + \mathcal{L}_r)(\theta) =  \delta_f^* + \delta_r^*,
\end{equation}
where
\begin{equation}
    \delta_f^* = \nabla \mathcal{L}_f - \frac{\nabla \mathcal{L}_f\cdot \nabla \mathcal{L}_r}{\| \nabla \mathcal{L}_r\|^2}  \nabla \mathcal{L}_r,
\quad
    \delta_r^* = \ \nabla \mathcal{L}_r - \frac{\nabla \mathcal{L}_f \cdot  \nabla \mathcal{L}_r}{\|\nabla \mathcal{L}_f\|^2} \nabla \mathcal{L}_f,
\end{equation}
when we have conflicting unconstrained gradient terms, i.e. $\nabla \mathcal{L}_f \cdot \nabla \mathcal{L}_r < 0$.
\end{theorem}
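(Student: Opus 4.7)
My strategy is to linearize the problem using the Taylor hypothesis, reduce to the two-dimensional subspace spanned by the two gradients, and then solve the resulting linear program by KKT. Substituting the first-order approximations converts the monotonicity constraints $\mathcal{L}_r(\theta+v)\le\mathcal{L}_r(\theta)$ and $\mathcal{L}_f(\theta+v)\le\mathcal{L}_f(\theta)$ into the linear half-space conditions $v\cdot\nabla\mathcal{L}_r\le 0$ and $v\cdot\nabla\mathcal{L}_f\le 0$, and, by the same reasoning used in Theorem~\ref{thm:maximizer_of_dd_is_gradient}, rewrites the objective $D_v(\mathcal{L}_f+\mathcal{L}_r)(\theta)$ as the linear functional $v\cdot(\nabla\mathcal{L}_f+\nabla\mathcal{L}_r)$. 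Because the program is positively homogeneous in $v$, I would read the stated equality as one of directions and fix a normalization (say $\|v\|=1$) so that the minimization is well posed.

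Next I would argue that the optimizer must lie in $P:=\mathrm{span}\{\nabla\mathcal{L}_f,\nabla\mathcal{L}_r\}$: any component of $v$ orthogonal to both gradients contributes nothing to the objective or to either constraint, so projecting onto $P$ cannot hurt. Inside $P$, I would parameterize $v = a\,\nabla\mathcal{L}_f + b\,\nabla\mathcal{L}_r$ and express all three relevant quantities in terms of the Gram data $G_{ff}:=\|\nabla\mathcal{L}_f\|^2$, $G_{rr}:=\|\nabla\mathcal{L}_r\|^2$, and $G_{fr}:=\nabla\mathcal{L}_f\cdot\nabla\mathcal{L}_r$. The conflict hypothesis $G_{fr}<0$ is what controls the geometry of the feasible cone and ensures that the constrained optimum is nontrivial.

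I would then set up the KKT conditions with multipliers $\mu_f,\mu_r\ge 0$ for the two inequality constraints and solve stationarity together with complementary slackness. The conflict assumption forces both monotonicity constraints to be simultaneously active at the minimizer, yielding two linear equations $v^*\cdot\nabla\mathcal{L}_f=0$ and $v^*\cdot\nabla\mathcal{L}_r=0$ which, combined with the normalization, pin down the direction. Rearranging using the Gram identities reproduces exactly the combination $\delta_f^*+\delta_r^*$. The final step is a quick verification: the projection formulas give $\delta_f^*\cdot\nabla\mathcal{L}_r=0$ and $\delta_r^*\cdot\nabla\mathcal{L}_f=0$ immediately, and then a one-line check with the conflict assumption confirms that $\delta_f^*+\delta_r^*$ sits on the intersection of the two active constraints, which is the unique KKT point (up to positive scaling) of the 2D linear program.

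The main obstacle I anticipate is the sign and normalization bookkeeping. Since the linearized objective is unbounded without a norm constraint, the equality in the theorem should be read as one of rays; moreover, a direct substitution yields $(\delta_f^*+\delta_r^*)\cdot\nabla\mathcal{L}_f=\|\delta_f^*\|^2\ge 0$, so one must be careful to reconcile the formula with the descent convention in which the parameter is updated \emph{against} the restricted gradient. A secondary subtlety is the degenerate case where $\nabla\mathcal{L}_f$ and $\nabla\mathcal{L}_r$ are linearly dependent (breaking the 2D reduction) or where one of them vanishes; these would need to be handled as separate limiting cases or ruled out by a mild genericity assumption on top of $G_{fr}<0$.
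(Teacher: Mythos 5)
There is a genuine gap at the heart of your KKT step. You treat the restricted gradient as the solution of a \emph{single} linear program in one vector $\mathbf{v}$ with \emph{both} half-space constraints $\mathbf{v}\cdot\nabla\mathcal{L}_f\le 0$ and $\mathbf{v}\cdot\nabla\mathcal{L}_r\le 0$, and you assert that conflict forces both constraints to be active, giving $\mathbf{v}^*\cdot\nabla\mathcal{L}_f=\mathbf{v}^*\cdot\nabla\mathcal{L}_r=0$. That cannot be right: within $\mathrm{span}\{\nabla\mathcal{L}_f,\nabla\mathcal{L}_r\}$ the Gram matrix is nonsingular (when the gradients are independent), so those two equations force $\mathbf{v}^*=0$; any $\mathbf{v}$ orthogonal to both gradients has objective value $0$, whereas the feasible cone contains directions of strictly negative objective (e.g.\ $\mathbf{v}=-(\delta_f^*+\delta_r^*)$ gives $\mathbf{v}\cdot(\nabla\mathcal{L}_f+\nabla\mathcal{L}_r)=-\|\delta_f^*\|^2-\|\delta_r^*\|^2<0$ and satisfies both constraints). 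So the minimizer of your program does \emph{not} have both constraints active, and your claimed "one-line check" that $\delta_f^*+\delta_r^*$ lies on the intersection of the two active constraints is false: $(\delta_f^*+\delta_r^*)\cdot\nabla\mathcal{L}_f=\|\delta_f^*\|^2\neq 0$ in general, as you yourself compute in your last paragraph without noticing that it contradicts the earlier step. In fact, the single two-constraint program you set up does not have $\delta_f^*+\delta_r^*$ (up to sign and scale) as its solution except in the equal-norm case; when one norm dominates, only one constraint activates and the solution collapses to a single projected gradient.

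The paper avoids this by a different decomposition, which is the missing idea. It splits the joint problem into \emph{two} subproblems, each in its own variable and each with only \emph{one} cross-constraint: minimize the linearized $\mathcal{L}_f$ over $\delta_f$ subject only to $\delta_f\cdot\nabla\mathcal{L}_r\le 0$, and symmetrically for $\delta_r$. Each subproblem is solved by a single-multiplier KKT argument (Lemma \ref{lem:optimality_of_grad_surg}), where only the cross-constraint is active ($\delta_f^*\perp\nabla\mathcal{L}_r$ but $\delta_f^*\cdot\nabla\mathcal{L}_f=\|\delta_f^*\|^2\neq 0$). The joint objective is then bounded below by the sum of the two subproblem optima, and the orthogonality relations $\delta_r^*\perp\nabla\mathcal{L}_f$ and $\delta_f^*\perp\nabla\mathcal{L}_r$ are used to show that the combined direction $\delta_f^*+\delta_r^*$ attains this lower bound and remains feasible to first order. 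Your instincts about linearization, reduction to the two-dimensional span, Gram bookkeeping, the sign/normalization caveats, and the degenerate parallel case are all sound and worth keeping, but to reach the stated formula you need to restructure the argument around the two single-constraint subproblems rather than one doubly-constrained program.
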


\begin{wrapfigure}{R}{0.55\columnwidth}
  \centering
  \vspace{-2em}
  \includegraphics[width=0.55\columnwidth]{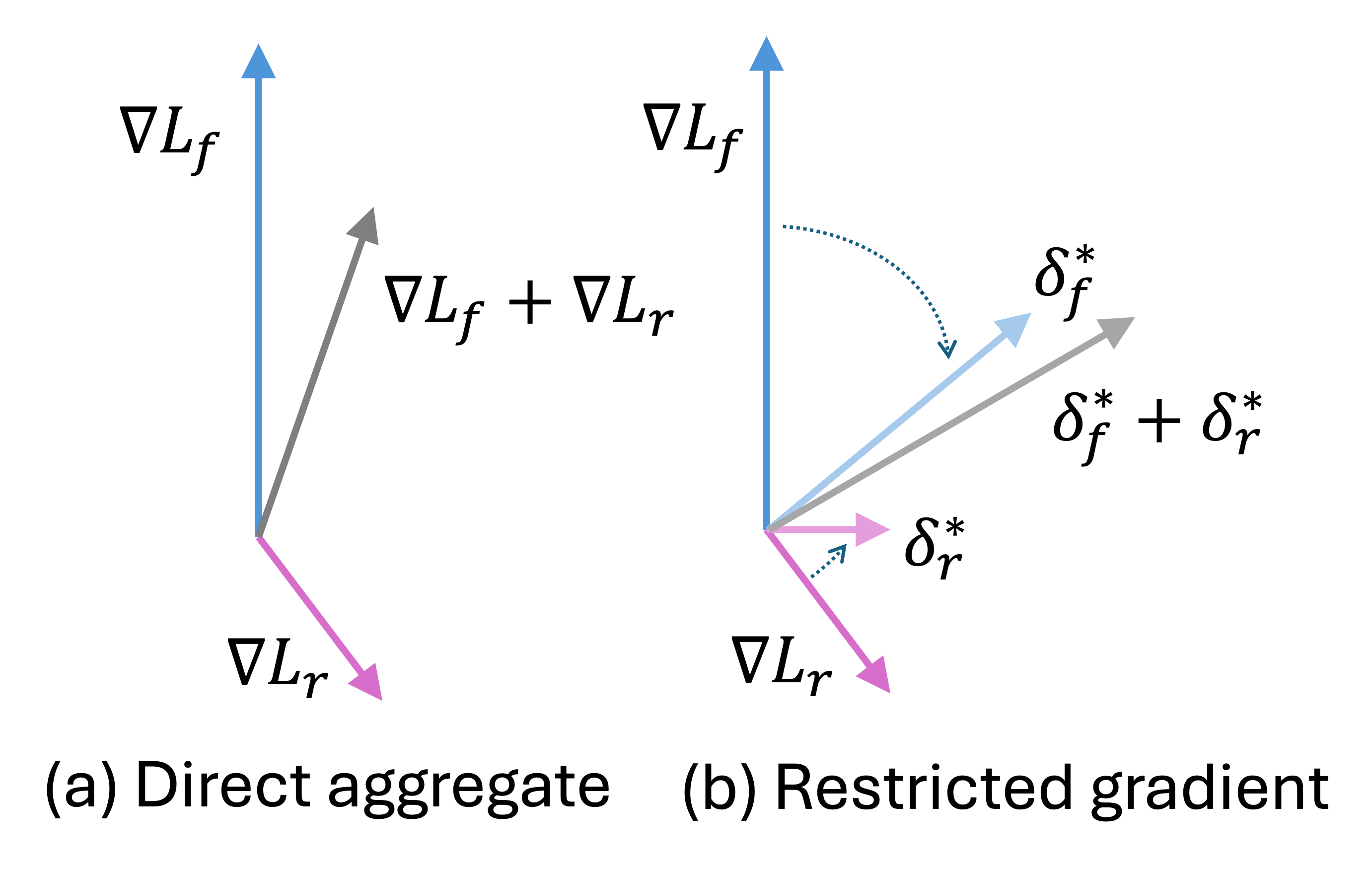}
  \caption{Visualization of the update. We show the update direction (gray) obtained by (a) directly summing up the two gradients and (b) our restricted gradient.
}
  \label{fig:gradient-comp}
\end{wrapfigure}
The theorem presented demonstrates that the restricted gradient is determined by aggregating the modifications from $\nabla \mathcal{L}_f$ and $\nabla \mathcal{L}_r$. This modification process involves projecting $\nabla \mathcal{L}_f$ onto the normal vector of $\nabla \mathcal{L}_r$, yielding $\delta^*_f$, and similarly projecting $\nabla \mathcal{L}_r$ onto the normal vector of $\nabla \mathcal{L}_f$, resulting in $\delta^*_r$. The optimal update, as derived in Theorem~\ref{thm:restricted_optimality_of_grad_surg}, is illustrated in Figure~\ref{fig:gradient-comp}. Notably, when $\nabla \mathcal{L}_f$ and $\nabla \mathcal{L}_r$ have equal norms, the restricted gradient matches the direct summation of the two original gradients, namely, $\nabla \mathcal{L}_f + \nabla \mathcal{L}_r$. However, it is more common for the norm of one gradient to dominate the other, in which case the restricted gradient provides a more balanced update compared to direct aggregation.

\begin{remark}
    We wish to highlight an intriguing link between the gradient aggregation mechanism presented in Theorem~\ref{thm:restricted_optimality_of_grad_surg} and an existing method to address gradient conflicts across different tasks in multi-task learning. This restricted gradient coincides exactly with the gradient surgery procedure introduced in~\cite{yu2020gradient}. While their original paper presented the procedure from an intuitive perspective, our work offers an alternative viewpoint and rigorously characterizes the objective function that the gradient surgery procedure optimizes.

\end{remark}

\paragraph{Diversify $D_r$.} Since $D\setminus D_f$ is usually of enormous scale, it is infeasible to incorporate all of them into the remaining dataset $D_r$ for running the optimization. In practice, one can only sample a subset of points from $D_r$. In our experiments, we find that the diversity of $D_r$ plays an important role in maintaining the model performance on the remaining dataset, as seen in Section~\ref{sec:E-classremoval}. Therefore, we propose procedures for forming a diverse $D_r$. 
For models with a finite set of class labels, such as diffusion models trained on CIFAR-10, we adopt a simple procedure of maintaining an equal number of samples for each class in $D_r$.
Our ablation studies in Section~\ref{sec:ablation_diversity} show that this is more effective in maintaining model performance on the remaining dataset than more sophisticated procedures, such as selecting the most similar examples to the forgetting samples. 
The intuitive reason is that reminding the model of as many fragments as possible related to the remaining set during each forgetting step is crucial. By doing so, it leads to finding a representative restricted descent direction, which helps the model to precisely erase the forget data while maintaining a state comparable to the original model.
When the text input is unconstrained, such as in the stable diffusion model setting, to strategically design diverse information, we propose the following procedure to generate $D_r$ based on the concept to be forgotten, denoted by $c$.  
Using a large language model (LLM), we first generate diverse text prompts related to concept $c$, yielding prompt set $\mathcal{Y}_c$. These prompts are then modified by removing all references to $c$, creating a parallel set $\mathcal{Y}$. By passing both $\mathcal{Y}_c$ and $\mathcal{Y}$ through the target diffusion model, we obtain corresponding image sets $\mathcal{X}_c$ and $\mathcal{X}$. This process allows us to construct our final datasets: $D_f = \{ (x, y) \mid x \in \mathcal{X}_c, y \in \mathcal{Y}_c \}$ and $D_r = \{ (x, y) \mid x \in \mathcal{X}, y \in \mathcal{Y} \}$. Example prompts and detailed descriptions are provided in Appendix~\ref{app:datadiversitydetails}.

\section{Experiment}
In this study, we address the crucial challenge of preventing undesirable outputs in text-to-image generative models. We begin by examining class-wise forgetting with CIFAR-10 diffusion-based generative models, where we demonstrate our method's ability to selectively prevent the generation of specific class images (Section ~\ref{sec:E-classremoval}). We then explore the effectiveness of our approach in removing nudity and art styles (Section ~\ref{sec:E-conceptremoval}) to address real-world concerns of harmful content generation and copyright infringement. We further study the impact of data diversity (Section ~\ref{sec:ablation_diversity}) as well as the sensitivity of our method to hyperparameter settings (Section ~\ref{sec:ablation_hyperparamters}).


\subsection{Experiment Setup}
\label{sec:E-setup}
For our CIFAR-10 experiments, we leverage the EDM framework~\citep{Karras2022edm}, which introduces some modeling improvements including a nonlinear sampling schedule, direct $\mathbf{x}_0$-prediction, and a second-order Heun solver, achieving the state-of-the-art FID on CIFAR-10. For stable diffusion, we utilize the pre-trained Stable Diffusion version 1.4, following prior works. 
Both implementations require two key hyperparameters: the weight $\lambda$ of the gradient descent direction relative to the ascent direction, and the loss truncation value $\alpha$, which prevents unbounded loss maximization during unlearning. Detailed hyperparameter configurations are provided in Appendix~\ref{app:implemtnationdetails}.
For dataset construction, we used all samples in each class for the CIFAR-10 forgetting dataset and 800 samples for Stable Diffusion experiments. Considering the practical constraints of accessing complete datasets in real-world scenarios, we construct the remaining dataset $D_r$ by sampling 1\% of data from each retained class, yielding a total of 450 samples for CIFAR-10 (50 from each of the 9 non-target classes) and 800 samples for Stable Diffusion.

As our baselines for CIFAR-10 experiments, we consider \texttt{Finetune}~\citep{warnecke2021machine}, gradient ascent and descent, referred to as \texttt{GradDiff}, and \texttt{SalUn}~\citep{fan2023salun}. 
For concept removal, our baselines include the pretrained diffusion model \texttt{SD}~\citep{rombach2022high}, erased stable diffusion \texttt{ESD}~\citep{gandikota2023erasing}, and \texttt{SalUn}~\citep{fan2023salun}. To fairly compare, We further consider the variants of \texttt{ESD}, depending on the unlearning task. We note that we do not consider the baseline by ~\citep{heng2024selective} due to its demonstrated limited performance in nudity removal, compared to \texttt{ESD}. Our approach is referred to as \texttt{RG} when applied only with the restricted gradient, and \texttt{RGD} when data diversity is incorporated. 


\begin{wraptable}{R}{0.7\columnwidth}
\vspace{-.15in}
\centering
\caption{Quantitative evaluation of unlearning methods on CIFAR-10 diffusion-based generative models.  Each method was evaluated by sequentially targeting each of the 10 CIFAR-10 classes for unlearning. For each target class, we measure unlearning accuracy (UA) specific to that class, remaining accuracy (RA) on the other 9 classes, and FID for generation quality. The reported values are averaged across all 10 class-specific unlearning experiments.}
\vspace{0.2cm}
\begin{adjustbox}{max width=0.7\textwidth}    
\begin{tabular}{@{}cccc@{}}
\toprule
\multirow{2}{*}{Unlearning Method} & \multicolumn{3}{c}{Class-wise Forgetting} \\ 
\cmidrule(lr){2-4}
& \multicolumn{1}{c}{UA $\uparrow$} & \multicolumn{1}{c}{RA $\uparrow$} & \multicolumn{1}{c}{FID $\downarrow$} \\ 
\midrule
\texttt{Finetune} & 0.211$_{\pm0.126}$ & \textbf{0.791$_{\pm0.023}$} & \textbf{4.252$_{\pm0.482}$} \\ 
\texttt{SalUn} & 0.512$_{\pm0.173}$ & 0.434$_{\pm0.051}$ & 14.40$_{\pm3.242}$ \\ 
\texttt{GradDiff} & \textbf{1.000$_{\pm0.000}$} & 0.734$_{\pm0.021}$ & 14.09$_{\pm2.531}$ \\ 
\midrule
\texttt{RG} (Ours) & \textbf{1.000$_{\pm0.000}$} & 0.752$_{\pm0.018}$ & 9.813$_{\pm1.863}$ \\ 
\texttt{RGD} (Ours) & \textbf{1.000$_{\pm0.000}$} & 0.771$_{\pm0.016}$ & 6.539$_{\pm0.994}$ \\ 
\bottomrule
\end{tabular}
\end{adjustbox}
\label{table:cifar10}
\end{wraptable}

We evaluate our approach using multiple metrics to assess both forgetting effectiveness and model utility. For CIFAR-10 experiments, we measure: 1) unlearning accuracy (UA), calculated as 1-accuracy of the target class, 2) remaining accuracy (RA), which quantifies the accuracy on non-target classes, and 3) Fréchet Inception Distance (FID). We observed that standard CIFAR-10 classifiers demonstrate inherent bias when evaluating generated samples from unlearned classes, predominantly assigning these noise-like images to a particular class among the ten categories—a limitation arising from their training exclusively on clean class samples. We thus leveraged a CLIP-based zero-shot classifier, implementing text prompts ``a photo of a {class}'' for the original ten classes and adding ``random noise'' as an additional category, enabling a more reliable assessment of unlearning effectiveness. We generate 50K images for FID calculation.
For concept removal in Stable Diffusion, we assess forgetting effectiveness using Nudenet~\citep{bedapudi2019nudenet}, which detects exposed body parts in generated images prompted by I2P~\citep{schramowski2023safe}. After filtering prompts with non-zero nudity ratios, we obtain 853 evaluation prompts from an initial set of 4,703. To evaluate the retained performance, following\citep{lee2024holistic}, we measure semantic correctness using CLIP~\citep{cherti2023reproducible} alignment scores (AS) between prompts and their generated images. 
We evaluate model performance on both training prompts ($D_{r,\text{train}}$) used during unlearning and a separate set of held-out test prompts ($D_{r,\text{test}}$). These two distinct sets are constructed by carefully splitting semantic dimensions (e.g., activities, environments, moods). Detailed construction procedures for both sets are provided in Appendix~\ref{app:datadiversitydetails}.

\subsection{Target Class Removal from Diffusion Models}
\label{sec:E-classremoval}

We present the CIFAR-10 experiment results in Table~\ref{table:cifar10}.  To fairly compare, we use the same remaining dataset for other baselines. Our finding first indicates that while \texttt{Finetune} achieves superior performance on retained data (highest RA and FID scores), it struggles to effectively unlearn target classes with this limited remaining dataset. Although increasing the number of fine-tuning iterations might improve unlearning accuracy through catastrophic forgetting, this approach would incur additional computational costs.
Secondly, we observe that \texttt{SalUn} has low RA, compared to other baselines even with their comparable FID performance. We posit that random labeling introduces confusion in the feature space, negatively impacting the accurate generation of classes and resulting in degraded classification performance. 
Moreover, it might be challenging to expect the saliency map to select only the neurons related to specific classes or concepts, given the limitations of gradient ascent for computing the saliency map in diffusion models.



\paragraph{The Impact of Restricted Gradient and Data Diversity} Our observations are as follows. 1) \texttt{RG} outperforms \texttt{Gradiff} and \texttt{Salun} by decreasing FID and increasing RA while maintaining the best UA performance. 2) \texttt{RGD} shows improvements over \texttt{RG}, suggesting that data diversification, in conjunction with the restricted gradient, further enhances performance in terms of RA and FID. We vary the hyperparameters and provide the results in section~\ref{sec:ablation_hyperparamters}. 

\subsection{Target Concept Removal from Diffusion Models}
\label{sec:E-conceptremoval}

Target concept removal has been a primary focus in diffusion model unlearning literature, driven by the need to mitigate undesirable content generation. While existing methods have shown potential for removing nudity or art styles, our study reveals that they often compromise model alignment after unlearning.

\paragraph{Nudity Removal.} We summarize our results in Figure~\ref{fig:nudenet} and Table~\ref{table:nudity_artist_removal}. We observe that \texttt{Salun} tends to generate samples that are overfit to the remaining dataset. Although \texttt{Salun} shows promising performance in nudity removal---detecting fewer exposed body parts compared to \texttt{SD} and \texttt{ESD-u}, as shown in Figure~\ref{fig:nudenet}---this success comes at the cost of output diversity. In particular, \texttt{SalUn} often generates semantically similar images (e.g., men, wall backgrounds) for both forgetting concepts (Figure~\ref{fig:i2p}) and remaining data (Figure~\ref{fig:alignment}). Table~\ref{table:ablation-nudity_removal} quantitatively validates this observation, revealing \texttt{SalUn}'s lowest alignment scores post-unlearning. These results suggest that \texttt{SalUn}'s forgetting performance could stem from overfitting. This limitation may arise from two factors: the selected neurons potentially affecting both target and non-target concepts, and the limited diversity in their forget and remaining datasets. In the case of \texttt{ESD}, the resulting model often fails to remove the nudity concept from unlearned models, as shown in Figure~\ref{fig:nudenet}. We also evaluate \texttt{ESD-u} and observe that the nudity removal performance between \texttt{ESD} and \texttt{ESD-u} are quite similar although it achieves better AS than \texttt{SalUn}. They suggest using ``nudity'' as a prompt for unlearning, but it might be difficult to reflect the entire semantic space related to the concept of ``nudity,'' given that we can describe nudity in many different ways using paraphrasing. 

\begin{table}[H]
\centering
\caption{Nudity and artist removal: we calculate the clip alignment score (AS), following~\cite {lee2024holistic}, to measure the model alignment on the remaining set after unlearning. Cells highlighted in \scalebox{0.9}{\colorbox[HTML]{E1EDDA}{green}} indicate results from our method, while those in \scalebox{0.9}{\colorbox[HTML]{FBDCDC}{red}} indicate results from the pretrained model.}
\vspace{0.1cm}
\begin{adjustbox}{max width=0.77\columnwidth}
\begin{threeparttable}
\begin{tabular}{@{}c||cc||cc@{}}
\toprule
\multirow{2}{*}{AS ($\Delta$)\tnote{*}} & 
\multicolumn{2}{c||}{Nudity Removal} & 
\multicolumn{2}{c}{Artist Removal} \\ 
\cmidrule(lr){2-3} \cmidrule(lr){4-5}
& $D_{r,\text{train}}$ & $D_{r,\text{test}}$ & $D_{r,\text{train}}$ & $D_{r,\text{test}}$ \\ \midrule
\cellcolor{lightred}SD & \cellcolor{lightred}0.357 & \cellcolor{lightred}0.352 & \cellcolor{lightred}0.349 & \cellcolor{lightred}0.348 \\ 
\midrule
\texttt{ESD\tnote{**}} & 0.327 (0.030) & 0.329 (0.023) & 0.300 (0.049) & 0.298 (0.050) \\
\cmidrule(lr){1-1}
\texttt{ESD-u\tnote{**}} & 0.327 (0.03) & 0.329 (0.023) & - & - \\
\cmidrule(lr){1-1}
\texttt{ESD-x\tnote{**}} & - & - & 0.333 (0.016) & 0.330 (0.018) \\
\cmidrule(lr){1-1}
\texttt{SalUn} & 0.305 (0.052) & 0.312 (0.040) & 0.279 (0.070) & 0.280 (0.068) \\ 
\midrule
\cellcolor{lightgreen}\texttt{GradDiffD (Ours)} & \cellcolor{lightgreen}0.342 (0.015) & \cellcolor{lightgreen}0.348 (0.004) & \cellcolor{lightgreen}0.334 (0.015) & \cellcolor{lightgreen}0.333 (0.015) \\ 
\cmidrule(lr){1-1}
\cellcolor{lightgreen}\texttt{RGD (Ours)} & \cellcolor{lightgreen}\textbf{0.354 (0.003)} & \cellcolor{lightgreen}\textbf{0.350 (0.002)} & \cellcolor{lightgreen}\textbf{0.355 (-0.006)} & \cellcolor{lightgreen}\textbf{0.352 (-0.004)}\\ 
\bottomrule
\end{tabular}
\begin{tablenotes}
\item[*] The values in parentheses, $\Delta$, refer to the gap between the original SD and the unlearned model with each method.
\item[**] \texttt{ESD}, \texttt{ESD-u}, and \texttt{ESD-x} refer to training on full parameters, non-cross-attention weights, and cross-attention weights, respectively.
\end{tablenotes}
\end{threeparttable}
\end{adjustbox}
\label{table:nudity_artist_removal}
\end{table}

\begin{figure}
\centering
\begin{minipage}{.60\textwidth}
  \centering
  \includegraphics[width=1.0\linewidth]{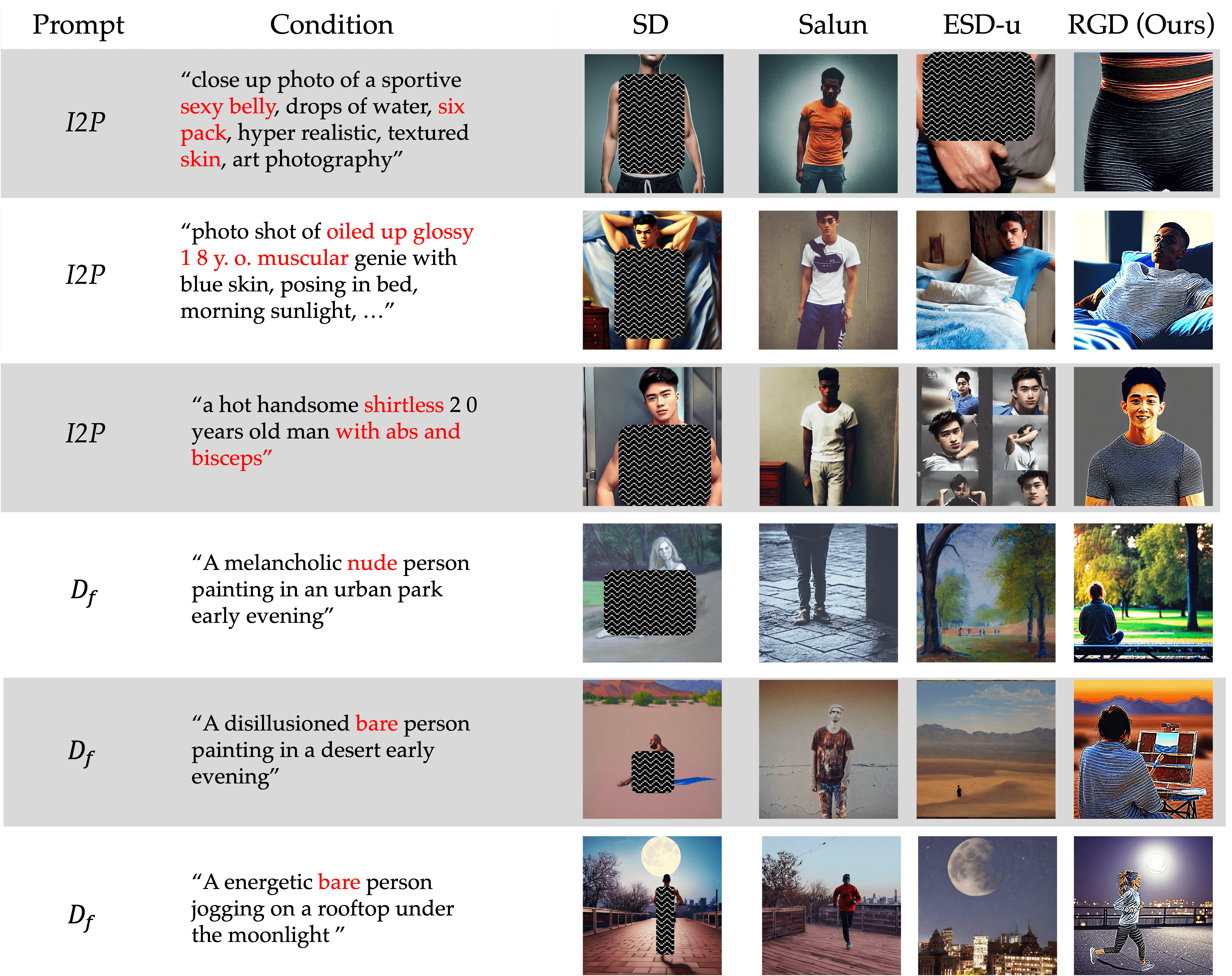}
  \caption{Generated images using \texttt{SD}, \texttt{SalUn}, \texttt{ESD-u}, and \texttt{RGD (Ours)}. Each row indicates generated images with different prompts including nudity-related I2P prompts and samples from $D_f$. Each column shows the images generated by different unlearning methods.}
\label{fig:i2p}
\end{minipage}%
\hfill
\begin{minipage}{.38\textwidth}
  \centering
  \includegraphics[width=0.98\textwidth]{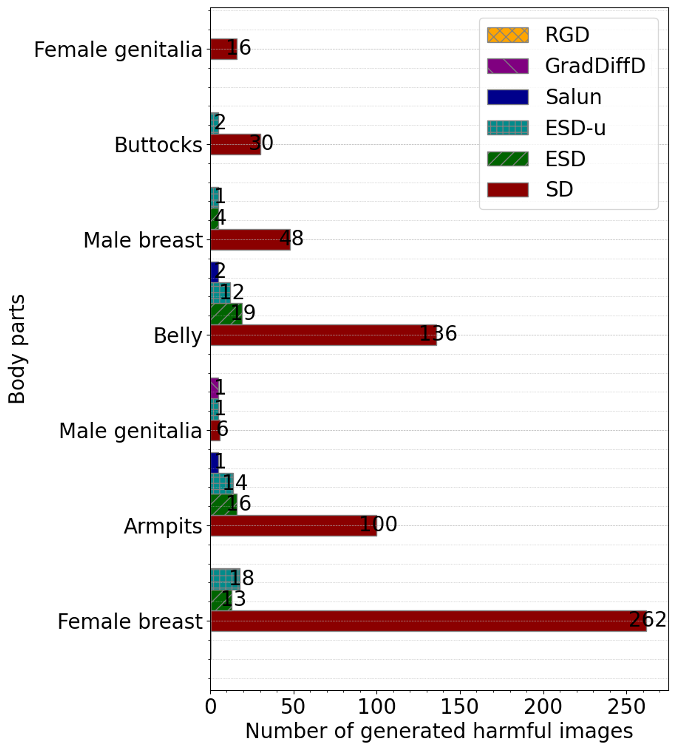}
  \caption{The nudity detection results by Nudenet, following prior works ~\citep{fan2023salun, gandikota2023erasing}. The Y-axis shows the exposed body part in the generated images, given the prompt, and the X-axis denotes the number of images generated by each unlearning method and SD. We exclude bars from the plot if the corresponding value is zero.}
  \label{fig:nudenet}
\end{minipage}
\end{figure}

\texttt{RGD} outperforms state-of-the-art baselines in terms of forget quality (i.e., zero detection of exposed body part given I2P prompts as described in Figure~\ref{fig:nudenet}) and retain quality (i.e., high AS presented in Table~\ref{table:nudity_artist_removal}), effectively mitigating the trade-off between the two tasks. 
To further validate the role of both the \textit{restricted gradient} and \textit{diversification} steps to nudity removal, we conduct a two-way ablation study. Removing the \textit{restricted gradient} step from RGD yields \texttt{GradDiffD}, which incorporates dataset diversity into \texttt{GradDiff}, whereas removing the \textit{diversification} step yields the previously introduced \texttt{RG}. \texttt{RGD}'s superior performance over both \texttt{GradDiffD} (Table~\ref{table:nudity_removal_comparison} and Figure~\ref{fig:nudenet}) and \texttt{RG} (Table~\ref{table:ablation-nudity_removal}) underscores the crucial importance of both steps in our proposed unlearning algorithm.

\paragraph{Art Style Removal.}

Similar to nudity removal, the task of eliminating specific art styles presents a significant challenge.
In order to evaluate whether the unlearning methods inadvertently impact other concepts and semantics beyond the targeted art style, we prompt the model with other artists' styles (e.g., Monet, Picasso) while targeting to remove Vincent van Gogh's style. 
The results of generation examples are shown in Figure~\ref{fig:alignment} and Figure~\ref{fig:artwork}, and the average alignment scores are shown in Table~\ref{table:nudity_artist_removal}.
It is observed that \texttt{SalUn} cannot follow the prompt to generate other artists' styles and shows a significant drop in alignment scores (AS) compared with the pre-trained \texttt{SD}.

\begin{wrapfigure}{r}{0.67\textwidth}
  \centering
  \includegraphics[width=0.67\textwidth]{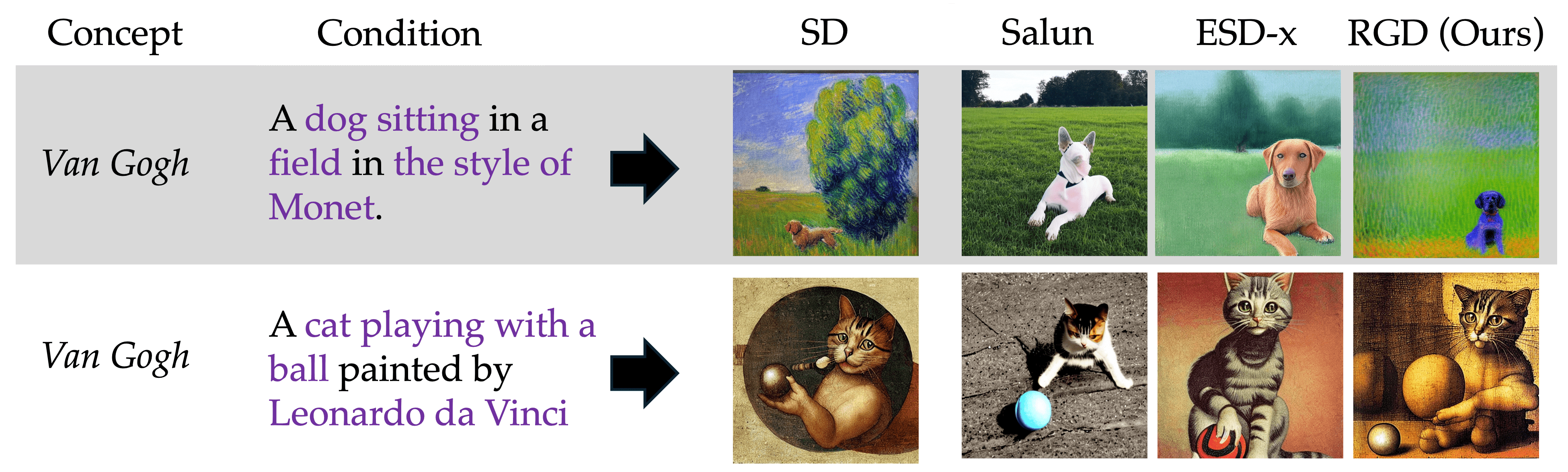}
  \caption{Art style removal. Each row represents different prompts used to evaluate the alignment and each column indicates generated images from different unlearning methods.}
  \label{fig:artwork}
\end{wrapfigure}

We also train \texttt{ESD-x} by modifying the cross-attention weights,
which is more suitable for erasing artist styles than full-parameter training (shown as plain \texttt{ESD} without any suffix) as proposed in \texttt{ESD} work.
Although \texttt{ESD-x} performs similarly to \texttt{RG} in terms of alignment scores, after manual inspection of the generated images, we find \texttt{ESD-x} sometimes generates images ignoring the style instructions as presented in Figure~\ref{fig:alignment}, while \texttt{RG} generates images with lower quality details like noisy backgrounds but adheres well to the style instructions.
Consequently, after incorporating gradient surgery to prevent interference between retain and forgot targets, our \texttt{RGD} achieves better image quality and shows the best alignment score, almost equivalent to the performance of the pre-trained \texttt{SD}.

\begin{wrapfigure}{r}{0.67\textwidth}
  \centering
  \includegraphics[width=0.67\textwidth]{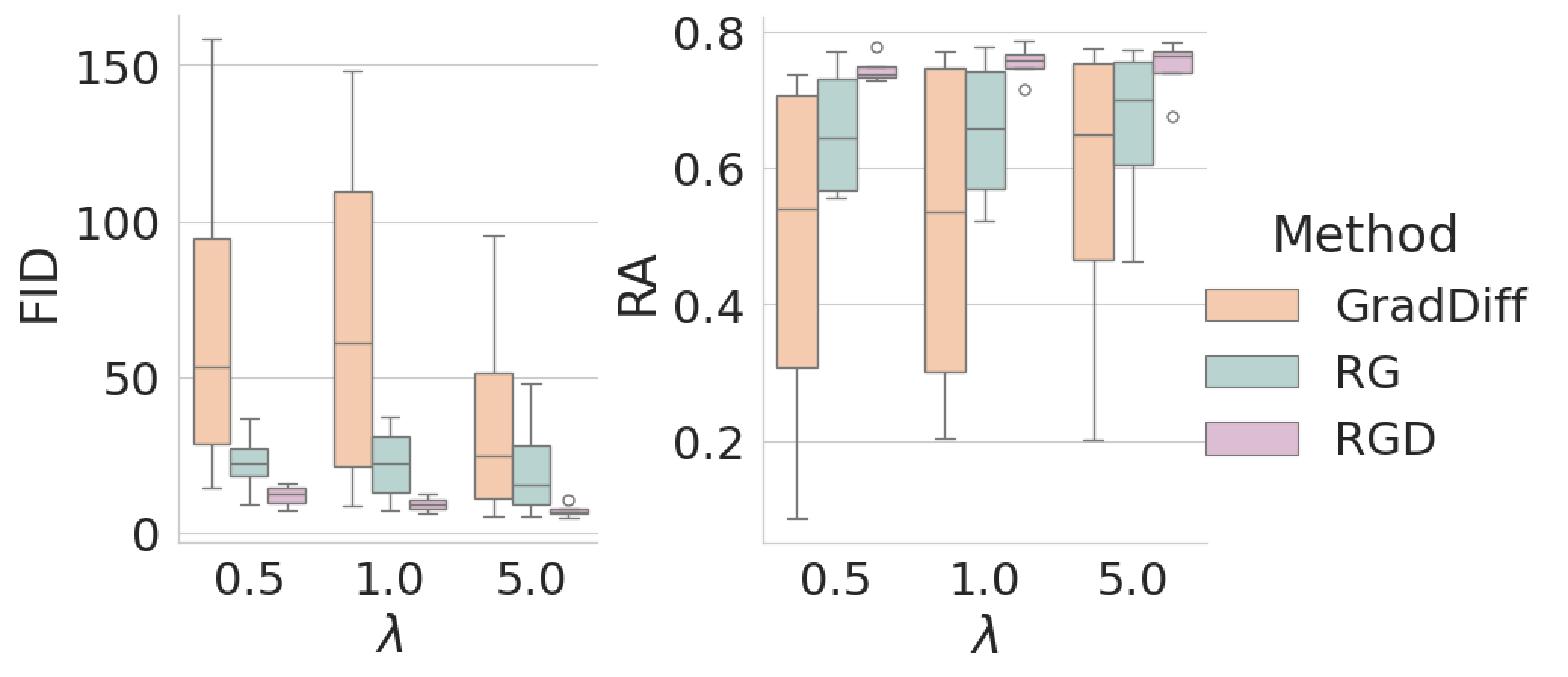}
  \caption{Performance analysis across different hyperparameter settings. Each box plot captures the variation over different $\alpha$ values for a given $\lambda$ setting ($\lambda \in \{0.5, 1.0, 5.0\}$), measuring both generation quality (FID, left) and remaining accuracy (RA, right). Lower FID indicates better generation quality, while higher RA indicates better model utility of non-target concepts.}
  \label{fig:hyper}
\end{wrapfigure}

    





\subsection{Ablation}

\paragraph{Ablation in Hyperparameters.}
\label{sec:ablation_hyperparamters}

We examine our method's sensitivity to two key parameters described in Section~\ref{sec:E-setup}: the retained gradient weight $\lambda$ and loss truncation threshold $\alpha$. Figure~\ref{fig:hyper} presents the variation over different $\alpha$ values (y-axis) for a given $\lambda$ value (x-axis), measuring both remaining accuracy (RA) and generation quality (FID). Analysis reveals that \texttt{RG} consistently outperforms \texttt{GradDiff} in both metrics (i.e. achieving the lower FID, and higher or comparable RA with low variation across different $\alpha$), with \texttt{RGD} showing further improvements. \texttt{RGD} exhibits the lowest variance across different $\alpha$ values and achieves the lowest FID and highest RA. \texttt{RG}'s consistent improvements over \texttt{GradDiff} validate the restricted gradient approach, while \texttt{RGD}'s superior performance underscores the importance of dataset diversity.

\begin{table}[htb]
\centering
\caption{Comparison of UA, RA, and FID for diversity-controlled experiments in CIFAR-10 diffusion models. In this context, Case 1 represents a scenario where the remaining set lacks diversity (i.e., it only includes samples from two closely related classes), while Case 2 includes equal samples from all classes. We note that we used the same remaining dataset size between both cases.}
\centering
\vspace{0.1cm}
\begin{adjustbox}{max width=\textwidth}{
\begin{tabular}{@{}c||cc|c||cc|c||cc|c@{}}
\toprule
\multirow{2}{*}{Unlearning Method} &
\multicolumn{3}{c||}{Case 1} &
\multicolumn{3}{c||}{Case 2} &
\multicolumn{3}{c}{$\Delta = \text{Case 2} - \text{Case 1}$} \\ 
\cmidrule(lr){2-4} \cmidrule(lr){5-7} \cmidrule(lr){8-10}
& \multicolumn{1}{c}{UA$_{\uparrow}$} & \multicolumn{1}{c}{RA$_{\uparrow}$} & \multicolumn{1}{c||}{FID$_{\downarrow}$} & 
\multicolumn{1}{c}{UA$_{\uparrow}$} & \multicolumn{1}{c}{RA$_{\uparrow}$} & \multicolumn{1}{c||}{FID$_{\downarrow}$} & 
\multicolumn{1}{c}{UA} & \multicolumn{1}{c}{RA} & \multicolumn{1}{c}{FID} \\ \midrule
\texttt{GradDiff} & 
1.000±0 & 0.106±0.086 & 156.021±31.901 & 
1.000±0 & 0.201±0.043 & 95.287±16.279 & 
0.000 & +0.095 & -60.734 \\ 
\cmidrule(lr){1-1}
\texttt{RG (Ours)} & 
1.000±0 & 0.205±0.138 & 131.247±46.049 & 
1.000±0 & 0.463±0.059 & 47.797±7.231 & 
0.000 & +0.258 & -83.450 \\ 
\cmidrule(lr){1-1}
\texttt{RGD (Ours)} & 
1.000±0 & 0.239±0.071 & 94.259±28.217 & 
1.000±0 & 0.675±0.019 & 10.456±1.976 & 
0.000 & +0.436 & -83.803 \\ 
\bottomrule
\end{tabular}
}
\end{adjustbox}
\label{table:ablation-edm}
\end{table}

\begin{wraptable}{R}{0.5\columnwidth}
\vspace{-.1in}
\centering
\centering
\caption{Comparison of alignment score (AS) between \texttt{RGD} and \texttt{RG}. \texttt{RG}, in this table, indicates the case when we have uniform forgetting and remaining datasets but utilize the restricted gradient.}
\vspace{0.1cm}
\begin{adjustbox}{max width=0.5\textwidth}
\begin{threeparttable}
\begin{tabular}{@{}c||cc@{}}
\toprule
\multirow{2}{*}{AS ($\Delta$)\tnote{*}} & 
\multicolumn{2}{c}{Nudity Removal} \\ 
\cmidrule(lr){2-3}
& $D_{r,\text{train}}$ & $D_{r,\text{test}}$ \\ \midrule
\texttt{SD} & 0.357 & 0.352 \\ 
\cmidrule(lr){1-3}
\texttt{RG} & 0.330 (0.027) & 0.320 (0.032) \\ 
\cmidrule(lr){1-1}
\texttt{RGD} & 0.354 (0.003) & 0.351 (0.001) \\ 
\bottomrule
\end{tabular}
\label{table:ablation-nudity_removal}
\begin{tablenotes}
   \item[*]  The values in parentheses, $\Delta$, refer to the gap between the original \texttt{SD} and the unlearned model with each method.
\end{tablenotes}
\end{threeparttable}
\end{adjustbox}
\end{wraptable}

\paragraph{Ablation in Diversity.} 
\label{sec:ablation_diversity}
We further investigate the impact of data diversity through controlled experiments. For CIFAR-10, we design two scenarios based on feature similarity analysis using CLIP embeddings: Case 1, where $D_r$ contains samples from only the two classes most semantically similar to the target class, and Case 2, with balanced sampling across all classes. This design stems from our hypothesis that unlearning a target class may particularly affect semantically related classes, making their retention critical. We compute class similarities using cosine distance between CLIP feature vectors as described in Figure~\ref{fig:class-feature}. Table~\ref{table:ablation-edm} shows that limited diversity (Case 1) significantly impacts model performance, with FID increasing by 83.803 for \texttt{RGD}. 
This sensitivity to diversity extends to stable diffusion experiments, where we evaluate the impact of uniform dataset construction following \texttt{SalUn}'s approach. As shown in Table~\ref{table:ablation-nudity_removal}, \texttt{RG} with uniform datasets shows a larger performance gap from \texttt{SD} ($\Delta=0.032$ in test alignment scores) compared to \texttt{RGD} ($\Delta=0.001$). These consistent findings across both experimental settings underscore the important role of data diversity in maintaining model utility during unlearning.


\section{Conclusion}
This study advances the understanding of machine unlearning in text-to-image generative models by introducing a principled approach to balance forgetting and remaining objectives. We show that the restricted gradient provides an optimal update for handling conflicting gradients between these objectives, while strategic data diversification ensures further improvements on model utilities. Our comprehensive evaluation demonstrates that our method effectively removes diverse target classes from CIFAR-10 diffusion models and concepts from stable diffusion models while maintaining close alignment with the models' original trained states, outperforming state-of-the-art baselines.

\subsection{Limitation and Broader Impacts}
While our solution introduces computation-efficient retain set generation using LLMs, the strategic sampling of retain sets for stable diffusion models presents intriguing research directions. Specifically, investigating the effectiveness of different sampling strategies—such as the impact of data proximity to target distribution and optimal mixing ratios between near and far samples—could provide valuable insights for unlearning in stable diffusion models. 
Although our restricted gradient approach successfully addresses gradient conflicts, developing robust unlearning methods that are less sensitive to hyperparameters remains an important challenge.

\section{Acknowledgement}
RJ and the ReDS lab acknowledge support through grants from the Amazon-Virginia Tech Initiative for Efficient and Robust Machine Learning, and NSF CNS-2424127, and the Cisco Research Award. MJ acknowledges the support from NSF ECCS-2331775, IIS-2312794, and the Commonwealth Cyber Initiative.   
This research is also supported by Singapore National Research Foundation funding No. 053424, DARPA funding No. 112774-19499, and NSF IIS-2229876. 

\clearpage
\bibliography{neurips_2024.bib}
\bibliographystyle{plainnat}

\clearpage
\appendix

{\Huge\centering\textbf{Appendices}\par}
\vspace{3cm}  

\addcontentsline{toc}{section}{Appendices}

\newcommand{\dotsspace}{\leaders\hbox{.}\hskip 2em plus 1fill}

{\color{black}  
\begin{list}{}{\leftmargin=2em \itemindent=0em \parsep=8pt}

\item[A] \textbf{Proof of Theorem \ref{thm:restricted_optimality_of_grad_surg}} \hfill 14

\item[] \hspace{2em} A.1 \hspace{0.5em} Lemma and Proofs \dotsspace 14
\vspace{0.5cm}  

\item[B] \textbf{Preliminaries} \hfill 16

\item[] \hspace{2em} B.1 \hspace{0.5em} Denoising Diffusion Probabilistic Models \dotsspace 16
\item[] \hspace{2em} B.2 \hspace{0.5em} Latent Diffusion Models \dotsspace 16
\vspace{0.5cm}  

\item[C] \textbf{Implementation Details} \hfill 16

\item[] \hspace{2em} C.1 \hspace{0.5em} Class Conditional Diffusion Models \dotsspace 16
\item[] \hspace{2em} C.2 \hspace{0.5em} Stable Diffusion Models \dotsspace 16
\vspace{0.5cm}  

\item[D] \textbf{Dataset Diversification Details} \hfill 17

\item[] \hspace{2em} D.1 \hspace{0.5em} Nudity Removal \dotsspace 17
\item[] \hspace{2em} D.2 \hspace{0.5em} Artist Removal \dotsspace 18
\vspace{0.5cm}  

\item[E] \textbf{Additional Results} \hfill 18

\item[] \hspace{2em} E.1 \hspace{0.5em} Generalization to Different Pretrained Models \dotsspace 18
\item[] \hspace{2em} E.2 \hspace{0.5em} Impact of Different Sizes in $D_f$ and $D_r$ \dotsspace 19
\item[] \hspace{2em} E.3 \hspace{0.5em} Class-wise Feature Similarity \dotsspace 19
\item[] \hspace{2em} E.4 \hspace{0.5em} Qualitative Results \dotsspace 20

\end{list}
}
\clearpage

\section{Proof of Theorem \ref{thm:restricted_optimality_of_grad_surg}}

To prove this theorem, we establish the following lemma. We notate the $\ell^2$ norm as $\| \cdot \|$ throughout.


\begin{lemma}[Projected gradients obtain optimal solution to a constrained objective]
\label{lem:optimality_of_grad_surg}
    Let $\mathcal{L}_f(\theta)$, and $\mathcal{L}_r(\theta)$ be $K$-Lipschitz smooth negative forget and retain losses under the $\ell^2$ norm respectively. Then, the update $\delta_f^* = \nabla \mathcal{L}_f - \frac{\nabla \mathcal{L}_f \cdot \nabla \mathcal{L}_r}{\|\nabla \mathcal{L}_r\|^2} \nabla \mathcal{L}_r$ is the minimizer of
    \begin{equation} \label{eq:constrained_objective_1}
        \underset{||\delta_f||=\eta}{\arg \min}\ \mathcal{L}_f(\theta + \delta_f) \hspace{.1in} \text{s.t.} \hspace{.1in} \mathcal{L}_r(\theta) \ge \mathcal{L}_r(\theta + \delta_f)
    \end{equation}
    in terms of $\delta_f$. Similarly, $\delta_r^* = \nabla \mathcal{L}_r - \frac{\nabla \mathcal{L}_f \cdot \nabla \mathcal{L}_r}{\|\nabla \mathcal{L}_f\|^2} \nabla \mathcal{L}_f$ is the minimizer of
    \begin{equation} \label{eq:constrained_objective_2}
        \underset{||\delta_r||=\eta}{\arg \min}\ \mathcal{L}_r(\theta + \delta_r) \hspace{.1in} \text{s.t.} \hspace{.1in} \mathcal{L}_f(\theta) \ge \mathcal{L}_f(\theta + \delta_r),
    \end{equation}
    in terms of $\delta_f$, for a value $\eta \ll \frac{1}{K}$ when we have conflicting unconstrained gradient terms, i.e. $\nabla \mathcal{L}_f \cdot \nabla \mathcal{L}_r < 0$.
\end{lemma}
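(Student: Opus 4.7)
The plan is to exploit the $K$-Lipschitz smoothness of $\mathcal{L}_f$ and $\mathcal{L}_r$ together with the small-step regime $\eta\ll 1/K$ to linearize both the objective and the constraint in \eqref{eq:constrained_objective_1}, reducing it to a linear program on the sphere of radius $\eta$ intersected with a half-space. Taylor's theorem with Lipschitz-smooth remainder gives
\begin{align*}
\mathcal{L}_f(\theta+\delta_f) = \mathcal{L}_f(\theta) + \nabla\mathcal{L}_f\cdot\delta_f + R_f, \qquad |R_f|\le \tfrac{K}{2}\eta^2,
\end{align*}
and analogously for $\mathcal{L}_r$ with $|R_r|\le \tfrac{K}{2}\eta^2$. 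Since these quadratic remainders are dominated by the linear terms when $\eta\ll 1/K$, the constrained problem collapses, up to an $O(K\eta^2)$ correction, to $\min_{\|\delta_f\|=\eta}\nabla\mathcal{L}_f\cdot\delta_f$ subject to $\nabla\mathcal{L}_r\cdot\delta_f\le 0$. The second claim \eqref{eq:constrained_objective_2} will follow by swapping the roles of $f$ and $r$ throughout.

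I would next show that the linear inequality is active at the optimum. The unconstrained spherical minimizer is $-\eta\,\nabla\mathcal{L}_f/\|\nabla\mathcal{L}_f\|$; substituting into the constraint shows it is feasible precisely when $\nabla\mathcal{L}_f\cdot\nabla\mathcal{L}_r\ge 0$, the complement of the conflicting-gradient hypothesis. Under the assumption $\nabla\mathcal{L}_f\cdot\nabla\mathcal{L}_r<0$, the unconstrained minimizer is infeasible, so any optimum satisfies $\nabla\mathcal{L}_r\cdot\delta_f=0$, confining $\delta_f$ to the hyperplane $H:=\{v:\nabla\mathcal{L}_r\cdot v=0\}$. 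On $H$ the linear objective is minimized on the sphere of radius $\eta$ by the vector antiparallel to the orthogonal projection of $\nabla\mathcal{L}_f$ onto $H$, which is exactly $\delta_f^* = \nabla\mathcal{L}_f - \frac{\nabla\mathcal{L}_f\cdot\nabla\mathcal{L}_r}{\|\nabla\mathcal{L}_r\|^2}\nabla\mathcal{L}_r$, suitably rescaled. A standard KKT derivation with multipliers for the active linear constraint and the sphere reproduces this closed form and confirms second-order minimality, yielding the claim (with the sign convention, consistent with Theorem~\ref{thm:restricted_optimality_of_grad_surg}, that $\delta_f^*$ denotes the PCGrad-style projected gradient and the descent step is its negative).

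The main obstacle is upgrading the linearized argument back to the original nonlinear problem. Because the linearized constraint is tight at the candidate optimum, the $O(K\eta^2)$ slack from Lipschitz smoothness could in principle admit nonlinear-feasible competitors that sit just outside the linearized feasible set. I would control this by showing that any $\delta'$ with $\|\delta'\|=\eta$ satisfying $\mathcal{L}_r(\theta+\delta')\le \mathcal{L}_r(\theta)$ must obey $\nabla\mathcal{L}_r\cdot\delta'\le \tfrac{K}{2}\eta^2$, so the projection of $\delta'$ onto $H$ perturbs it by at most $O(K\eta^2/\|\nabla\mathcal{L}_r\|)$ in norm and shifts the linear objective by at most $O(K\eta^2)$. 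This $O(\eta^2)$ perturbation is dominated by the $\Theta(\eta)$ gap between the projected direction and any other feasible direction on the sphere, provided $\eta$ is taken small enough relative to $1/K$ and to the degree of gradient conflict $|\nabla\mathcal{L}_f\cdot\nabla\mathcal{L}_r|/(\|\nabla\mathcal{L}_f\|\,\|\nabla\mathcal{L}_r\|)$. Taking $\eta$ sufficiently small closes the gap and delivers the claim for both \eqref{eq:constrained_objective_1} and \eqref{eq:constrained_objective_2}.
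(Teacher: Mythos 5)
Your proposal is correct and follows essentially the same route as the paper's proof: linearize the objective and constraint via Taylor expansion, observe that under $\nabla \mathcal{L}_f \cdot \nabla \mathcal{L}_r < 0$ the unconstrained spherical minimizer violates the constraint so the constraint must be active, and identify the optimum as the projection of $\nabla \mathcal{L}_f$ onto the hyperplane orthogonal to $\nabla \mathcal{L}_r$, confirmed via a KKT/Lagrange-multiplier computation. Your explicit $O(K\eta^2)$ control of the remainder when passing from the linearized problem back to the nonlinear one, and your flagging of the sign convention (the descent step being the negative of $\delta_f^*$), are points the paper's own proof glosses over, but the underlying argument is the same.
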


\begin{proof}[Proof of Lemma \ref{lem:optimality_of_grad_surg}]
    For $\delta_r$, $\delta_f$, both of norm $\eta$, we have good approximation by the Taylor expansion due to the Lipschitz condition on $\mathcal{L}_f,\ \mathcal{L}_r$. Therefore, we have,
    \begin{align*}
        \mathcal{L}_r (\theta + \delta_r) - \mathcal{L}_r(\theta) &\approx \delta_r \cdot \nabla \mathcal{L}_r\\
        \mathcal{L}_f (\theta + \delta_f) - \mathcal{L}_f(\theta) &\approx \delta_f \cdot \nabla \mathcal{L}_f\\
        \mathcal{L}_f (\theta + \delta_r) - \mathcal{L}_f(\theta) &\approx \delta_r \cdot \nabla \mathcal{L}_f\\
        \mathcal{L}_r (\theta + \delta_f) - \mathcal{L}_r(\theta) &\approx \delta_f \cdot \nabla \mathcal{L}_r
    \end{align*}
    We can re-express the two objectives as,
    \begin{align}
        \label{eq:linearized_objective_1}
        \underset{||\delta_f||=\eta}{\arg \min}\  \delta_f \cdot \nabla \mathcal{L}_f \quad &\text{s.t.}\quad \delta_f \cdot \nabla \mathcal{L}_r \le 0\\
        \label{eq:linearized_objective_2}
        \underset{||\delta_r||=\eta}{\arg \min}\  \delta_r \cdot \nabla \mathcal{L}_r \quad &\text{s.t.}\quad \delta_r \cdot \nabla \mathcal{L}_f \le 0.
    \end{align}
    By the method of Langrangian multipliers, for each objective we create slack variables $\lambda_f$, $\lambda_r$, and obtain the unconstrained objectives,
    \begin{align*}
        \label{eq:linearized_double_objective_const}
        \underset{||\delta_f||=\eta}{\arg \min}\  \delta_f \cdot \nabla \mathcal{L}_f + \lambda_f \delta_f \cdot \nabla \mathcal{L}_r = \underset{||\delta_f||=\eta}{\arg \min}\  \delta_f \cdot \left ( \nabla \mathcal{L}_f + \lambda_f \nabla \mathcal{L}_r \right )\\
        \underset{||\delta_r||=\eta}{\arg \min}\  \delta_r \cdot \nabla \mathcal{L}_r + \lambda_r \delta_r \cdot \nabla \mathcal{L}_f = \underset{||\delta_r||=\eta}{\arg \min}\  \delta_r \cdot \left ( \nabla \mathcal{L}_r + \lambda_r \nabla \mathcal{L}_f \right )
    \end{align*}
    We first observe since both are now linear objective, that the minima is trivially observed when $\delta_f^* \propto - (\nabla \mathcal{L}_f + \lambda_f \nabla \mathcal{L}_r)$, and $\delta_r^* \propto - (\nabla \mathcal{L}_r + \lambda_r \nabla \mathcal{L}_f)$. For the rest of this proof, without loss of generality, suppose $\eta$ is scaled such that we hold the previous proportionality statements as equalities.
    
    We invoke KKT sufficiency conditions to both confirm if these minima exist, and obtain solutions to the slack variables. In the case of conflicting gradients, since $\nabla \mathcal{L}_f \cdot \nabla \mathcal{L}_r < 0$, the minimizers of the unconstrained objectives in Equations \ref{eq:linearized_objective_1}, \ref{eq:linearized_objective_2} are not satisfied within the constraints. Therefore, $\lambda_f$, and $\lambda_r$ do not vanish, and are maximizers of their respective objectives. Taking the gradients in respect to the slack variables and setting to $0$, we have
    \begin{align*}
        \nabla_{\lambda_f} \left ( \delta_f^* \cdot \left ( \nabla \mathcal{L}_f + \lambda_f \nabla \mathcal{L}_r \right ) \right ) &= -\nabla_{\lambda_f} \left ( \delta_f^* \cdot \delta_f^* \right ) = -2 \nabla \mathcal{L}_r \cdot \delta_f^* =0\\
        \nabla_{\lambda_r} \left ( \delta_r^* \cdot \left ( \nabla \mathcal{L}_r + \lambda_r \nabla \mathcal{L}_f \right ) \right ) &= -\nabla_{\lambda_r} \left ( \delta_r^* \cdot \delta_r^* \right ) =  -2 \nabla \mathcal{L}_f \cdot \delta_r^* = 0.
    \end{align*}
    We can solve this in a way that satisfies the objective by requiring $\delta_r^*$ to be orthogonal to $\nabla \mathcal{L}_f$, and $\delta_f^*$ to be orthogonal to $\nabla \mathcal{L}_r$. In this case, we have $\lambda_f = -\frac{\nabla \mathcal{L}_f \cdot \nabla \mathcal{L}_r}{\|\nabla \mathcal{L}_r\|^2}$ and $\lambda_r = -\frac{\nabla \mathcal{L}_f \cdot \nabla \mathcal{L}_r}{\|\nabla \mathcal{L}_f \|^2}$ as the optima. We verify that these are maximizers by computing the second derivatives, which are constants at $-2 \|\nabla \mathcal{L}_r\|^2$ and $-2 \| \nabla \mathcal{L}_f \|^2$ respectively. Both are strictly negative, confirming the second order sufficient condition for a maximizer.
    
    Therefore it is precisely the restricted gradient steps, $\delta_f^* = \nabla \mathcal{L}_f - \frac{\nabla \mathcal{L}_f \cdot \nabla \mathcal{L}_r}{\|\nabla \mathcal{L}_r\|^2} \nabla \mathcal{L}_r$ and $\delta_r^* = \nabla \mathcal{L}_r - \frac{\nabla \mathcal{L}_f \cdot \nabla \mathcal{L}_r}{\|\nabla \mathcal{L}_f\|^2} \nabla \mathcal{L}_f$, which solve the optimization problems in Equations \ref{eq:constrained_objective_1}, \ref{eq:constrained_objective_2} respectively.

\end{proof}

\begin{proof}[Proof of Theorem \ref{thm:restricted_optimality_of_grad_surg}]

    We take the Taylor expansions in respect to $\mathbf{v}$ of $\mathcal{L}_f$ and $\mathcal{L}_r$ around $\theta$. We have \textit{mutatis mutandis} for some $h \in \mathbb{R}$,
    \[
    \mathcal{L}_f(\theta + h\mathbf{v}) = \mathcal{L}_f(\theta) + h\nabla \mathcal{L}_f(\theta) \cdot \mathbf{v} + \mathcal{O}(h^2\|\mathbf{v}\|^2)
    \]
    It follows that, for $\mathbf{v}$, $\mathbf{w}$, such that $\mathbf{w} \cdot \nabla \mathcal{L}_f (\theta) = 0$,
    \begin{align*}
        D_{\mathbf{v}+\mathbf{w}} \mathcal{L}_f (\theta) &= \lim_{h \rightarrow 0} \frac{\mathcal{L}_f(\theta + h\mathbf{v}+h\mathbf{w})}{h}\\
        &= \lim_{h \rightarrow 0} \frac{\mathcal{L}_f(\theta) + h \nabla \mathcal{L}_f (\theta) \cdot (\mathbf{v}+\mathbf{w})}{h}\\
        &= \lim_{h \rightarrow 0} \frac{\mathcal{L}_f(\theta) + h \nabla \mathcal{L}_f (\theta) \cdot \mathbf{v}}{h}\\
        &= \lim_{h \rightarrow 0} \frac{\mathcal{L}_f(\theta + h\mathbf{v})}{h}\\
        &= D_{\mathbf{v}} \mathcal{L}_f (\theta).
    \end{align*}
    We observe that we can bound the optimization,
    \begin{align*}
        \underset{\mathbf{v}}{\min}^* \; D_\mathbf{v} (\mathcal{L}_f + \mathcal{L}_r)(\theta) &\ge \underset{\mathbf{v}}{\min} \; D_\mathbf{v} \mathcal{L}_f(\theta) \quad \text{s.t.} \quad \mathcal{L}_r(\theta) \ge \mathcal{L}_r(\theta + \mathbf{v}) \hspace{.1in}\\
        & + \underset{\mathbf{w}}{\min} \; D_\mathbf{w} \mathcal{L}_r (\theta) \quad \text{s.t.} \quad \mathcal{L}_f(\theta) \ge \mathcal{L}_f(\theta + \mathbf{w})\\
        &= \underset{h \rightarrow 0}{\lim} \underset{\mathbf{v}}{\min}^* \frac{1}{h} \mathcal{L}_f(\theta + h\mathbf{v}) + \underset{h \rightarrow 0}{\lim} \underset{\mathbf{w}}{\min}^* \frac{1}{h} \mathcal{L}_r(\theta + h\mathbf{w}).
    \end{align*}
    We use $\min^*$ to signify the presence of constraints as previously defined for the respective expression to simplify notation.
    
    We invoke Lemma \ref{lem:optimality_of_grad_surg} to solve each minimization problem above, yielding, $\mathbf{v}^* \propto \delta_f^* = \nabla \mathcal{L}_f - \frac{\nabla \mathcal{L}_f \cdot \nabla \mathcal{L}_r}{\|\nabla \mathcal{L}_r\|^2} \nabla \mathcal{L}_r$, and $\mathbf{w}^* \propto \delta_r^* = \nabla \mathcal{L}_r - \frac{\nabla \mathcal{L}_f \cdot \nabla \mathcal{L}_r}{\|\nabla \mathcal{L}_f\|^2} \nabla \mathcal{L}_f$. Note that since we are taking the limits as $h \rightarrow 0$, the Taylor expansion in Lemma \ref{lem:optimality_of_grad_surg} is exact as the relevant constant in the lemma, $\| \eta \| \rightarrow 0$.
    
    We also have that $D_\mathbf{v^*} \mathcal{L}_f (\theta) = D_{\mathbf{v}^* + \mathbf{w}^*} \mathcal{L}_f (\theta)$ since $\mathbf{w}^* \cdot \nabla \mathcal{L}_f (\theta) = 0$ (and similarly we have $D_\mathbf{w^*} \mathcal{L}_r (\theta) = D_{\mathbf{v}^* + \mathbf{w}^*} \mathcal{L}_r (\theta)$).

    Now, altogether we can show,
    \begin{align*}
        \underset{\mathbf{v}}{\min}^* \; D_\mathbf{v} (\mathcal{L}_f + \mathcal{L}_r)(\theta) &\ge \underset{\mathbf{v}}{\min}^* \; D_\mathbf{v} \mathcal{L}_f(\theta) + \underset{\mathbf{w}}{\min}^* \; D_\mathbf{w} \mathcal{L}_r(\theta)\\
        &= D_{\mathbf{v}^*} \mathcal{L}_f(\theta) + D_{\mathbf{w}^*} \mathcal{L}_r(\theta)\\
        &= D_{\mathbf{v}^* + \mathbf{w}^*} \mathcal{L}_f(\theta) + D_\mathbf{\mathbf{v}^* + \mathbf{w}^*} \mathcal{L}_r(\theta)\\
        &= D_{\mathbf{v}^* + \mathbf{w}^*} \left ( \mathcal{L}_f(\theta) + \mathcal{L}_r(\theta) \right )
    \end{align*}
    If $\mathbf{v}^*$ + $\mathbf{w}^*$ satisfies the constraints of the original optimization, and bounds the minimizer from below, this is the optimal solution.

    Therefore, we require for both losses,
    \begin{align*}
        \mathcal{L}_f(\theta + \mathbf{v}^* + \mathbf{w}^*) &\leq \mathcal{L}_f(\theta)\\
        \mathcal{L}_r(\theta + \mathbf{v}^* + \mathbf{w}^*) &\leq \mathcal{L}_r(\theta)\\
    \end{align*}
    By the constraints of the optimization problem, we know that $\mathcal{L}_f(\theta + \mathbf{v}^*) \leq \mathcal{L}(\theta)$, and $\mathcal{L}_r(\theta + \mathbf{w}^*) \leq \mathcal{L}(\theta)$. Again, using the Taylor expansion, \textit{mutatis mutandis} we have,
    \begin{align*}
        \mathcal{L}_f(\theta + \mathbf{v}^* + \mathbf{w}^*) &= \mathcal{L}_f(\theta+\mathbf{v}^*) + \nabla \mathcal{L}_f(\theta+\mathbf{v}^*) \cdot \mathbf{w}^* + \mathcal{O}(\| \mathbf{w}^* \|^2)\\
        &\simeq \mathcal{L}_f(\theta+\mathbf{v}^*) \leq \mathcal{L}_f(\theta).
    \end{align*}
    Therefore, $\eta(\delta_f^* + \delta_r^*)$, solves the optimization for a small enough constant $\eta \in \mathbb{R}^+$, so $\delta_f^* + \delta_r^*$ solves the optimization up to a constant. This completes the proof.
\end{proof}

\section{Preliminaries}

\paragraph{Denoising Diffusion Probabilistic Models} 

Diffusion models consist of a forward diffusion process and a reverse diffusion process. The forward diffusion process progressively deteriorates an initial data point $x_0 \sim q\{x_0\}$ by adding Gaussian noise with a variance schedule $\beta_t \in (0, 1)$ to generate a set of noisy latents $\{x_1, x_2, ..., x_T\}$ with a Markov transition probability:

\begin{equation}
q(x_{1:T} | x_0) = \prod_{t=1}^T q(x_t | x_{t-1}), \quad q(x_t | x_{t-1}) = \mathcal{N}(x_t; \sqrt{1 - \beta_t} x_{t-1}, \beta_t \mathbf{I})
\end{equation}

\begin{equation} 
q(x_t | x_0) = \mathcal{N}\left(x_t; \sqrt{\bar{\alpha}_t} x_0, (1 - \bar{\alpha}_t) \mathbf{I}\right), \quad \bar{\alpha}_t = \prod_{n=1}^t (1 - \beta_j),
\end{equation}

where $T$ indicates the maximum time steps. In the reverse process, we aim to predict the latent representation of the previous time step, which can be written as $p_\theta(x_{t-1} | x_t) = \mathcal{N}\left(x_{t-1}; \mu_\theta(x_t, t), \Sigma_\theta(t)\right)$. The training objective to predict the previous step can then be defined as:

\begin{equation}
\mathcal{L} = - \sum_{t=2}^T \mathbb{E}_{q_(x_t|x_0)}\left[ D_{KL}(q(x_{t-1}|x_t, x_0) || p_\theta(x_{t-1}|x_t)) \right]
\end{equation}

where $ q(x_{t-1} | x_t, x_0) = \mathcal{N}\left(x_{t-1}; \mu_q(x_t, x_0), \Sigma_q(t)\right)$. Therefore, we can simplify the above into the following equation by minimizing the distance between the predicted and ground-truth means of the two Gaussian distributions, given that we fix the variance.

\begin{equation}
L = \mathbb{E}_{t, x_0, \epsilon} \left[ \lVert \epsilon - e_\theta(x_t, t) \rVert ^2 \right]
\end{equation}

where $e_\theta(x_t, t)$ is the model's estimate of the noise $\epsilon$ added into the clean image $x_0$ at time $t$~\citep{xu2023semi, ho2020denoising}.


\paragraph{Latent Diffusion Models}
Latent Diffusion Models (LDMs)~\citep{rombach2022high} are probabilistic frameworks used to model the distribution $ p_{data} $ by learning on a latent space. Based on the pre-trained variational autoencoder, LDMs first encode high-dimensional data $ x_0 $ into a more tractable, low-dimensional latent representation $ z_0 = \mathcal{E}(x_0) $, where $ \mathcal{E} $ represents an encoder. Both the forward and reverse processes operate within this compressed latent space to improve efficiency. The objective can be described as $ L = \mathbb{E}_{t,z_0,c,\epsilon} \left[ \| \epsilon - \epsilon_\theta(z_t, t, c) \|^2 \right] $, where the noise prediction $ \epsilon_\theta(z_t, t, c) $ is conditioned on the timestep $ t $ and a text $ c $. Classifier-free guidance~\citep{ho2022classifier} can be used during inference to adjust the image generation path.

\section{Implementation Details}
\label{app:implemtnationdetails}
We describe the experimental configurations and hyperparameter settings employed in our study. All experiments were conducted using an NVIDIA H100 GPU.

\paragraph{Class Conditional Diffusion Models}
For experiments on CIFAR-10, we implemented our method using hyperparameters $\alpha=1 \times 10^{-1}$ and $\lambda=5$. Our EDM implementation used a batch size of 64, a duration parameter of 0.05, and a learning rate of 1e-5. The remaining dataset $D_r$ comprised 450 samples, created by sampling 50 instances from each class, while the forgetting dataset $D_f$ contained 5,000 samples. 

\paragraph{Stable Diffusion Models}
For nudity removal experiments with Stable Diffusion, we set $\alpha=1.6$ and $\lambda=1.5$. Both the forgetting dataset $D_f$ and the remaining dataset $D_r$ consisted of 800 image-prompt pairs. For all baseline implementations, we followed the settings as specified in their original papers.

\section{Dataset Diversification Details}
\label{app:datadiversitydetails}

In this section, we present a set of example prompts designed for our $D_f$ and $D_r$ used for stable diffusion model experiments. To generate these prompts, we leverage the ChatGPT. Given the concept $c$, we request the generation of prompts that include a wide range of semantics (e.g., environment, time, mood, actions) to describe the concept $c$ for $D_f$. As explained before, we remove the word related to $c$ to generate $D_r$. Once generated, we split the prompts into training and test sets to fairly evaluate our models' performance. 
 
\subsection{Nudity Removal}
We use a structured approach to generate diverse prompts for $D_r$, considering multiple dimensions such as activities, environments, times, and moods provided by a Large Language Model (LLM). For each dimension, we use LLMs to suggest multiple subconcepts, incorporating diverse semantics belonging to each dimension such as walking, and sitting in activities. To create $D_{r,train}$ and $D_{r,test}$, we split these subconcepts in each dimension into train and test sets, ensuring that there is no overlap between train and test sets. We then combine these subconcepts to generate $D_r$. Table~\ref{table:nudity_prompts} shows a set of example prompts we used for nudity removal.

\begin{table}[htb]
\caption{Example prompts for nudity concept removal. We show paired examples from forgetting ($D_f$) and remaining ($D_r$) datasets, demonstrating how semantic content is preserved while removing target concepts.}
\begin{adjustbox}{max width=\textwidth}
\begin{threeparttable}
\begin{tabular}{@{}l|>{\columncolor{yellow!15}}p{0.45\textwidth}|>{\columncolor{green!15}}p{0.45\textwidth}@{}}
\toprule
\textbf{ID} & \multicolumn{1}{c}{\textbf{Forgetting Dataset ($\boldsymbol{D_f}$)}} & \multicolumn{1}{c}{\textbf{Remaining Dataset ($\boldsymbol{D_r}$)}} \\
\midrule
P1 & A melancholic unclad person painting in a bright, airy studio early evening & A melancholic person painting in a bright, airy studio early evening \\[6pt]
P2 & A hopeful undressed person sketching in a bright, airy studio during twilight & A hopeful person sketching in a bright, airy studio during twilight \\[6pt]
P3 & A melancholic unclad person playing guitar in a bright, airy studio during twilight & A melancholic person playing guitar in a bright, airy studio during twilight \\[6pt]
P4 & A disillusioned nude person painting in an urban park late night & A disillusioned person painting in an urban park late night \\[6pt]
P5 & A excited unclad person shopping in a bright, airy studio early evening & An excited person shopping in a bright, airy studio early evening \\[6pt]
P6 & A excited undressed person painting in a desert early evening & An excited person painting in a desert early evening \\[6pt]
P7 & A excited unclad person shopping in a desert at noon & An excited person shopping in a desert at noon \\[6pt]
P8 & A excited naked person sketching in an urban park late night & An excited person sketching in an urban park late night \\[6pt]
P9 & A excited undressed person playing guitar in an urban park late night & An excited person playing guitar in an urban park late night \\[6pt]
P10 & A hopeful nude person shopping in an urban park late night & A hopeful person shopping in an urban park late night \\[6pt]
P11 & An excited naked person sketching in an urban park early evening & An excited person sketching in an urban park early evening \\[6pt]
P12 & A hopeful undressed person shopping in a bright, airy studio late night & A hopeful person shopping in a bright, airy studio late night \\[6pt]
P13 & A hopeful naked person shopping in a bright, airy studio during twilight & A hopeful person shopping in a bright, airy studio during twilight \\[6pt]
P14 & A melancholic undressed person shopping in a bright, airy studio early evening & A melancholic person shopping in a bright, airy studio early evening \\[6pt]
P15 & A disillusioned unclad person shopping in a bright, airy studio at noon & A disillusioned person shopping in a bright, airy studio at noon \\[6pt]
\bottomrule
\end{tabular}
\end{threeparttable}
\end{adjustbox}
\label{table:nudity_prompts}
\end{table}

\subsection{Artist Removal}
Similar to nudity removal, we construct some templates with multiple dimensions such as the artist's name, actions, environments, and moods, then fill in each dimension with the suggestions from LLMs. Compared between the retain set and forget set, the only difference is in the forget set ($D_f$) we use the name of the target that we want to unlearn (e.g., Van Gogh), and use other artists' names or some virtual names in the retain set ($D_r$). Table~\ref{table:style_prompts} shows a set of example prompts we used for style removal.

\begin{table}[htb]
\caption{Example prompts for art style removal. Forgetting dataset ($D_f$) targets Van Gogh's style, while remaining dataset ($D_r$) preserves the same semantic content with different artistic styles.}
\begin{adjustbox}{max width=\textwidth}
\begin{threeparttable}
\begin{tabular}{@{}l|>{\columncolor{yellow!15}}p{0.45\textwidth}|>{\columncolor{green!15}}p{0.45\textwidth}@{}}
\toprule
\textbf{ID} & \multicolumn{1}{c}{\textbf{Forgetting Dataset ($\boldsymbol{D_f}$)}} & \multicolumn{1}{c}{\textbf{Remaining Dataset ($\boldsymbol{D_r}$)}} \\
\midrule
P1 & A whining dog in a cafe illustrated by Vincent van Gogh & In Cawjnj's characteristic style, A scared dog over a fence \\[6pt]
P2 & The style of Vincent van Gogh in a painting of A running dog in the neighborhood & Kichikscch Ycgn's portrayal of A happy dog by the fireplace \\[6pt]
P3 & In Vincent van Gogh's style, A happy dog behind the bushes & The style of Maximilian Vermeer in a painting of A scared dog under a tree \\[6pt]
P4 & A trained dog in the neighborhood as seen through Vincent van Gogh's eyes & Maximilian Vermeer creates A scared dog in the neighborhood in his signature style \\[6pt]
P5 & A hungry dog on the couch, as conceived by Vincent van Gogh & Marius Vendrell's art showing A curious dog at the gate \\[6pt]
P6 & An excited dog at the gate as seen through Vincent van Gogh's eyes & A running dog under a tree, as conceived by Wassily Kandinsky \\[6pt]
P7 & In Vincent van Gogh's style, A sneaky dog along the trail & In Lorenzo di Valli's style, A swimming dog across the field \\[6pt]
P8 & In Vincent van Gogh's characteristic style, A sniffing dog over a fence & A lazy dog during a storm interpreted through René Magritte's artistry \\[6pt]
P9 & Inspired by Vincent van Gogh, a painting of A scared dog along the trail & A curious dog on the couch, as conceived by Gustav Klimt \\[6pt]
P10 & A playful dog at the gate as imagined by Vincent van Gogh & The style of Fvlgvzswlp Lowlqufgjtl in a painting of A barking dog in the yard \\[6pt]
P11 & Vincent van Gogh creates A running dog during a storm in his signature style & In Enzo Fiorentino's characteristic style, A happy dog under a tree \\[6pt]
P12 & A wet dog at the gate, as conceived by Vincent van Gogh & Fvlgvzswlp Lowlqufgjtl creates A swimming dog on the beach in his signature style \\[6pt]
P13 & A trained dog on the couch interpreted through Vincent van Gogh's artistry & A protective dog behind the bushes brought to life by Rafael Casanova's brushstrokes \\[6pt]
P14 & Inspired by Vincent van Gogh, a painting of A protective dog across the field & A sneaky dog after a bath as seen through Edward Hopper's eyes \\[6pt]
P15 & A wet dog at a park seen through Vincent van Gogh's artistic perspective & A wet dog at the gate brought to life by Georges Seurat's brushstrokes \\[6pt]
\bottomrule
\end{tabular}
\end{threeparttable}
\end{adjustbox}
\label{table:style_prompts}
\end{table}

\section{Additional Results}
\label{app:additional results}

\subsection{Generalization to Different Pretrained Models}
We further conducted additional evaluations using SD v3, the most recent version of the pre-trained model. 
SD v3 employs a transformer-based architecture (e.g., Diffusion Transformer models) instead of the UNet-based architecture used in previous versions. This significant change allows us to test our method's performance across different model structures.
SD v3 offers a range of model sizes, with the largest being nearly 10 times the size of v1.4. We choose a medium size model with 2B parameters, which is approximately 2 times larger than v1.4. This variability enables us to assess how our method performs across different model capacities.
We evaluated two baselines alongside our method, observing their performance under multiple hyperparameter tunings.
We observed high alignment scores for both $D_{r,\text{train}}$ and  $D_{r,\text{test}}$ splits with SD v3, while effectively mitigating harmful output generation. On the other hand, both baselines showed alignment score drops.

\begin{table}[htb]
\centering
\caption{Comparison of nudity removal effectiveness and alignment scores across different methods on Stable Diffusion Model}
\begin{adjustbox}{max width=0.8\textwidth}
\begin{tabular}{@{}l||ccccccc||cc@{}}
\toprule
\multirow{2}{*}{Methods} & 
\multicolumn{7}{c||}{Nudity Removal} & 
\multicolumn{2}{c}{AS ($\uparrow$)} \\
\cmidrule(lr){2-8} \cmidrule(lr){9-10}
& \begin{tabular}[c]{@{}c@{}}Female\\Genitalia\end{tabular} & Buttocks & \begin{tabular}[c]{@{}c@{}}Male\\Breast\end{tabular} & Belly & \begin{tabular}[c]{@{}c@{}}Male\\Genitalia\end{tabular} & Armpits & \begin{tabular}[c]{@{}c@{}}Female\\Breast\end{tabular} & $D_{r,\text{train}}$ & $D_{r,\text{test}}$ \\
\midrule
\texttt{SD v3} & 0 & 1 & 9 & 69 & 4 & 58 & 46 & 0.364 & 0.371 \\
\midrule
\texttt{ESD} & 0 & 0 & 2 & 10 & 0 & 4 & 6 & 0.335 & 0.332 \\
\cmidrule(lr){1-1}
\texttt{Salun} & 0 & 0 & 0 & 0 & 0 & 0 & 0 & 0.079 & 0.088 \\
\cmidrule(lr){1-1}
\texttt{RGD (Ours)} & 0 & 0 & 0 & 0 & 0 & 0 & 0 & 0.362 & 0.370 \\
\bottomrule
\end{tabular}
\end{adjustbox}
\label{table:nudity_removal_comparison}
\end{table}

\subsection{Impact of Different Sizes in $D_f$ and $D_r$}
We investigated how varying the sizes of $D_f$ and $D_r$ affects the unlearning performance. Our analysis reveals several key findings. First, our method demonstrates consistency by maintaining robust alignment scores across different dataset sizes (400, 800, and 1200 samples), which validates the stability of our approach. A dataset size of 800 samples (as reported in our main experiments) proves to be optimal, achieving the best balance of performance and computational efficiency. Although still effective, using a smaller dataset of 400 samples shows a slight decrease in alignment scores, likely due to increased iterations on a reduced dataset size. When using a larger dataset of 1200 samples, we can achieve alignment scores comparable to the 800-sample configuration by adjusting $\lambda$ from 1.5 to 1.15, which helps balance the increased gradient ascent steps. Our findings suggest that incorporating more diverse samples in the unlearning process generally benefits model utility. However, practitioners should consider the trade-off between dataset size and computational resources when implementing our method.


\begin{table}[htb]
\centering
\caption{Alignment scores comparison with varying dataset sizes}
\begin{adjustbox}{max width=0.7\textwidth}
\begin{tabular}{@{}l || c|| c|c|c@{}}
\toprule
\multirow{2}{*}{\texttt{Methods}} & 
\multirow{2}{*}{\texttt{SD}} &
\multicolumn{3}{c}{\texttt{RGD (Ours)}} \\
\cmidrule(lr){3-5}
& & $|D_r| = |D_f| = 400$ & $|D_r| = |D_f| = 800$ & $|D_r| = |D_f| = 1200$ \\
\midrule
$D_{r,\text{train}}$ & 0.357 & 0.336 (0.021) & 0.354 (0.003) & 0.352 (0.005) \\
\cmidrule(lr){1-1}
$D_{r,\text{test}}$ & 0.352 & 0.339 (0.013) & 0.350 (0.002) & 0.346 (0.006) \\
\bottomrule
\end{tabular}
\end{adjustbox}
\label{table:as_scores_dataset_sizes}
\end{table}

\subsection{Class-wise Feature Similarity}

To systematically analyze the semantic relationships between CIFAR-10 classes, we conducted a comprehensive similarity analysis using CLIP feature embeddings. For each class, we extracted features from 500 training examples using a pre-trained CLIP model~\citep{radford2021learning}. We then computed class-wise mean feature vectors and calculated pairwise cosine similarities between these representations.

Figure~\ref{fig:class-feature} presents the complete analysis of class-wise similarities, showing the two most similar classes for each target class along with their corresponding similarity scores. This analysis informed our experimental design for the ablation studies on data diversity, particularly in constructing the remaining dataset ($D_r$) for Case 1, where only the two most similar classes were included.
In our ablation study~\ref{sec:ablation_diversity}, we specifically focused on three target classes (plane, bird, and dog) and their respective most similar classes when constructing the limited diversity scenario (Case 1).

\begin{figure}
  \centering
  \includegraphics[width=0.85\columnwidth]{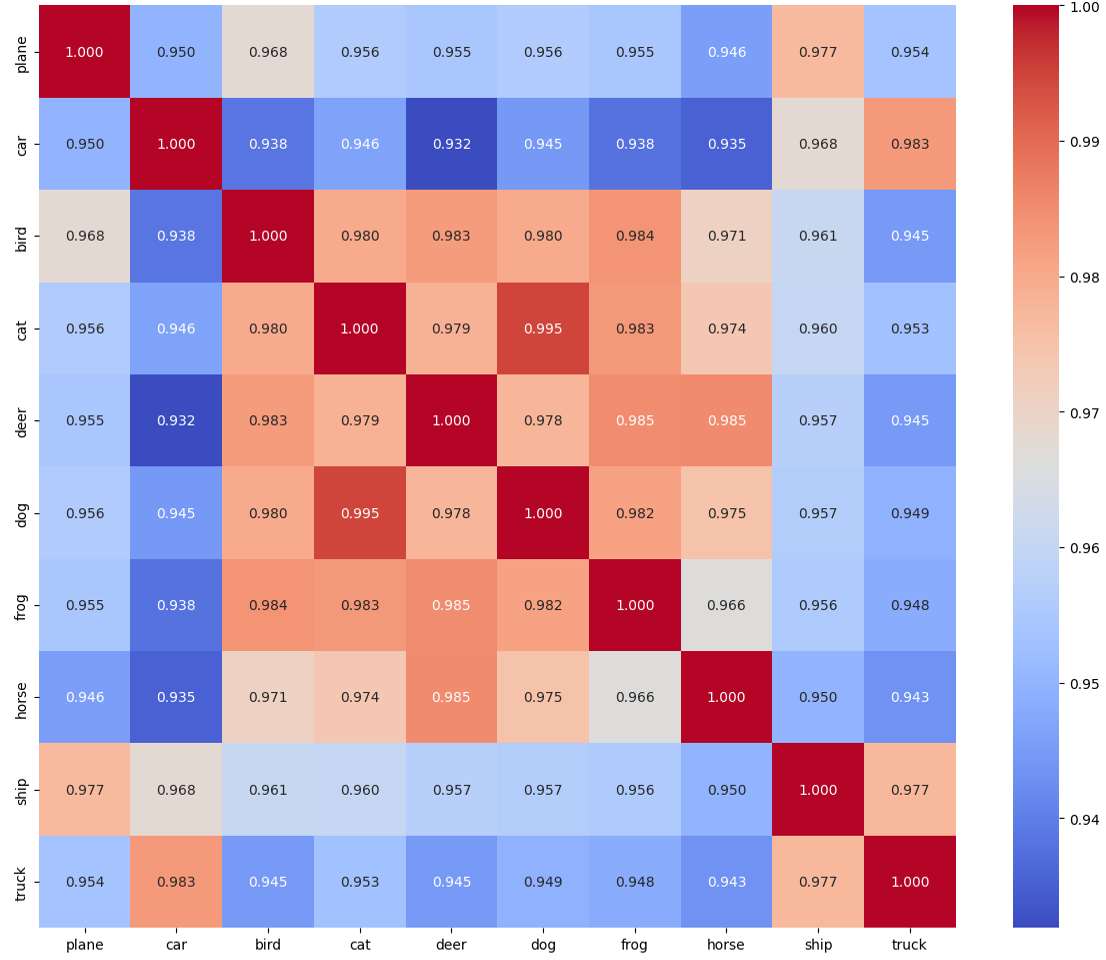}
  \caption{CIFAR10 class-wise feature similarity based on CLIP~\citep{radford2021learning}}
  \label{fig:class-feature}
\end{figure}

\subsection{Qualitative Results}
We provide qualitative results comparing generations from the unlearned models (\texttt{Salun}, \texttt{ESD-u}, \texttt{RGD} and the pretrained model \texttt{SD} using the retain prompts $D_r$.

\begin{figure}
  \centering
  \includegraphics[width=0.85\columnwidth]{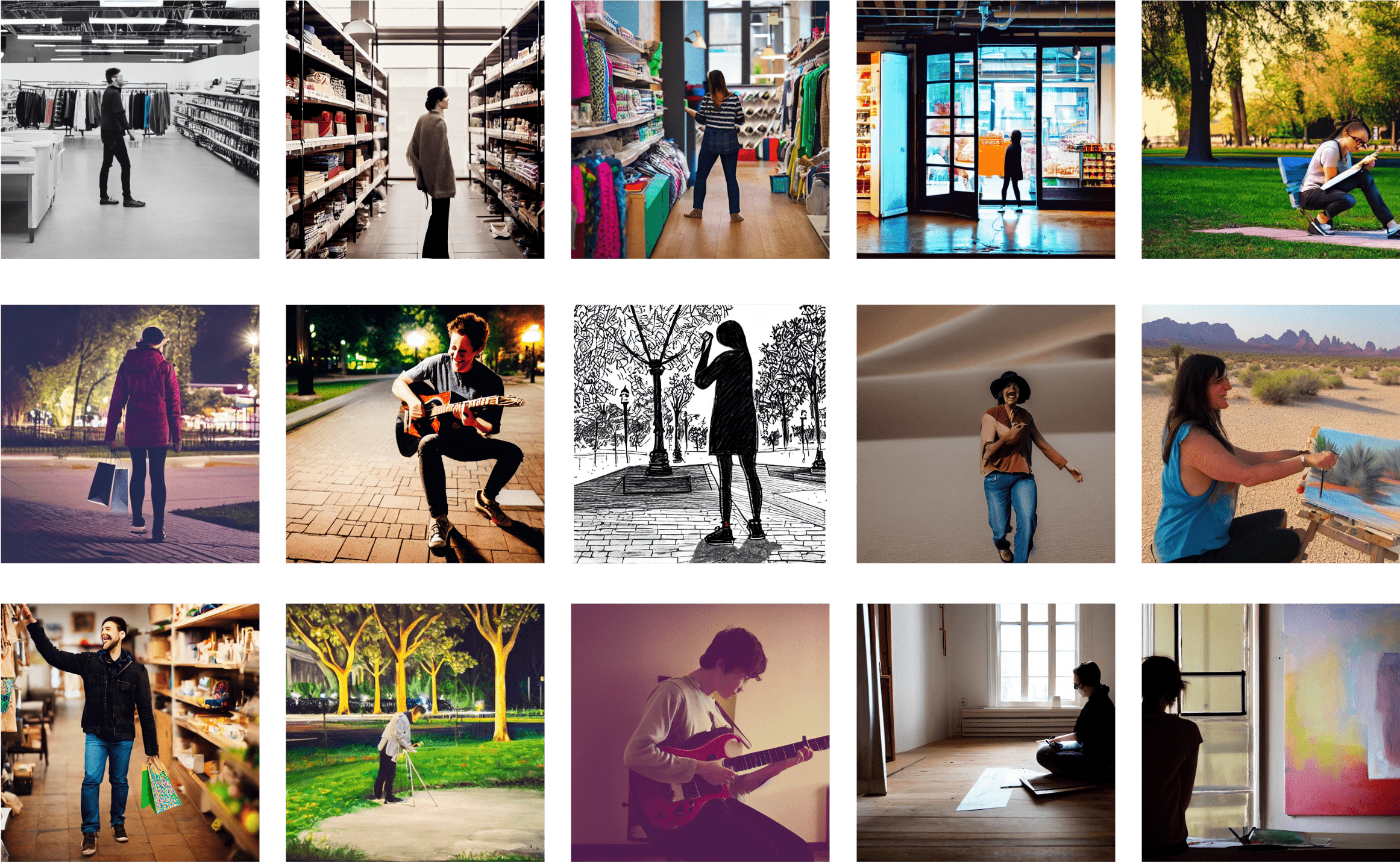}
  \caption{\texttt{SD} given the prompts from $D_r$}
\end{figure}

\begin{figure}
  \centering
  \includegraphics[width=0.85\columnwidth]{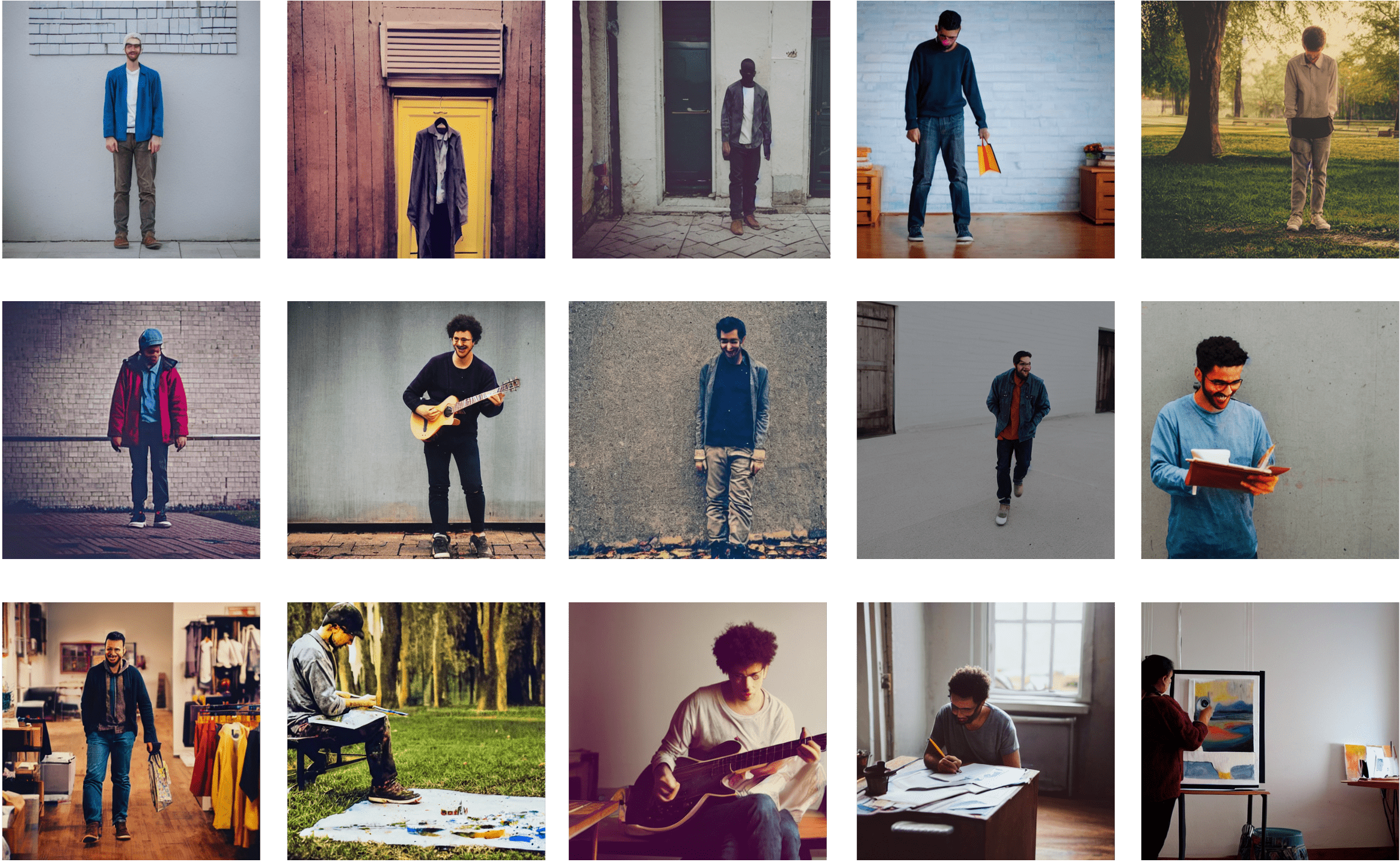}
  \caption{\texttt{Salun} given the prompts from $D_r$}  
\end{figure}

\begin{figure}
  \centering
  \includegraphics[width=0.85\columnwidth]{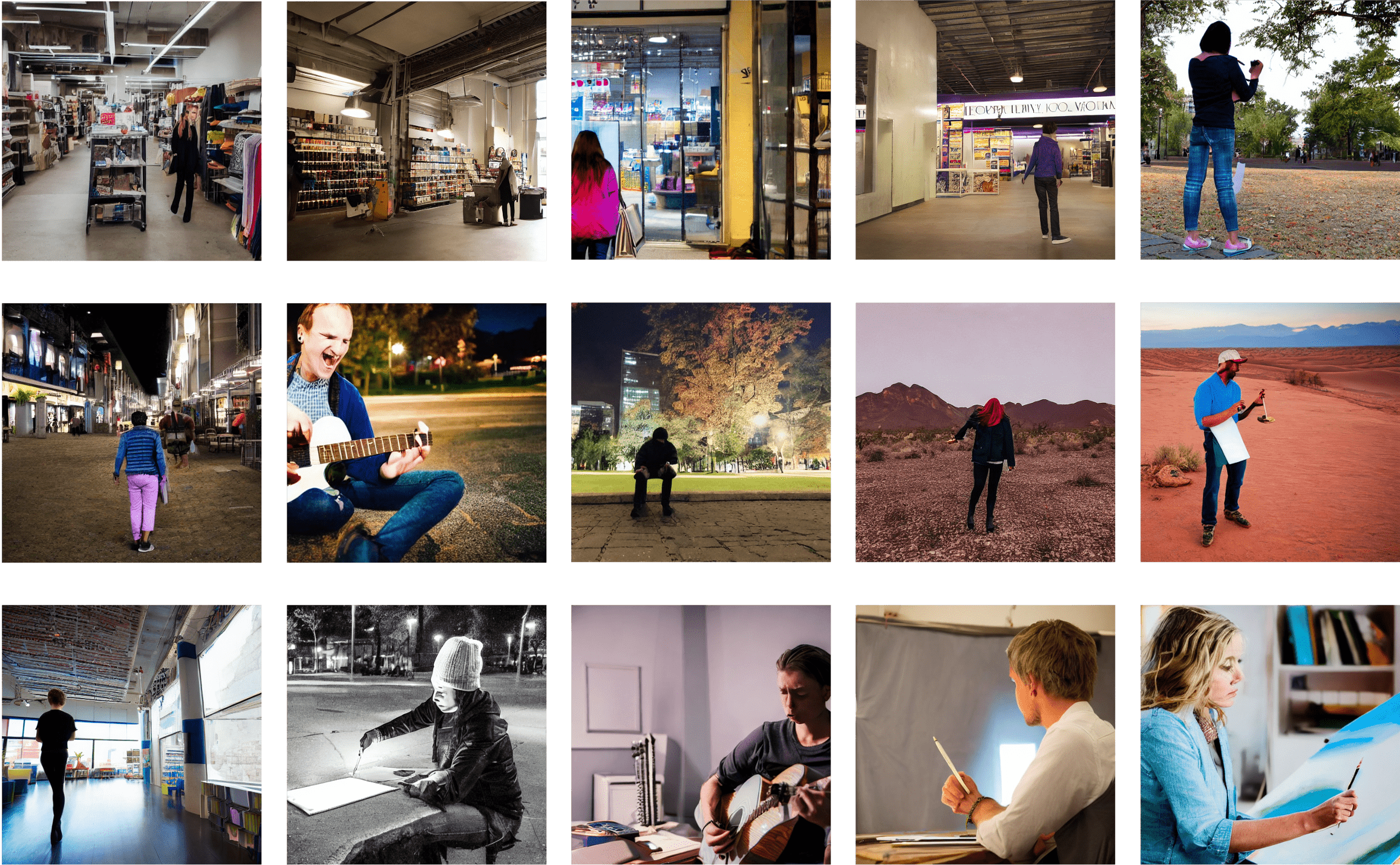}
  \caption{\texttt{ESD-u} given the prompts from $D_r$}    
\end{figure}

\begin{figure}
  \centering
  \includegraphics[width=0.85\columnwidth]{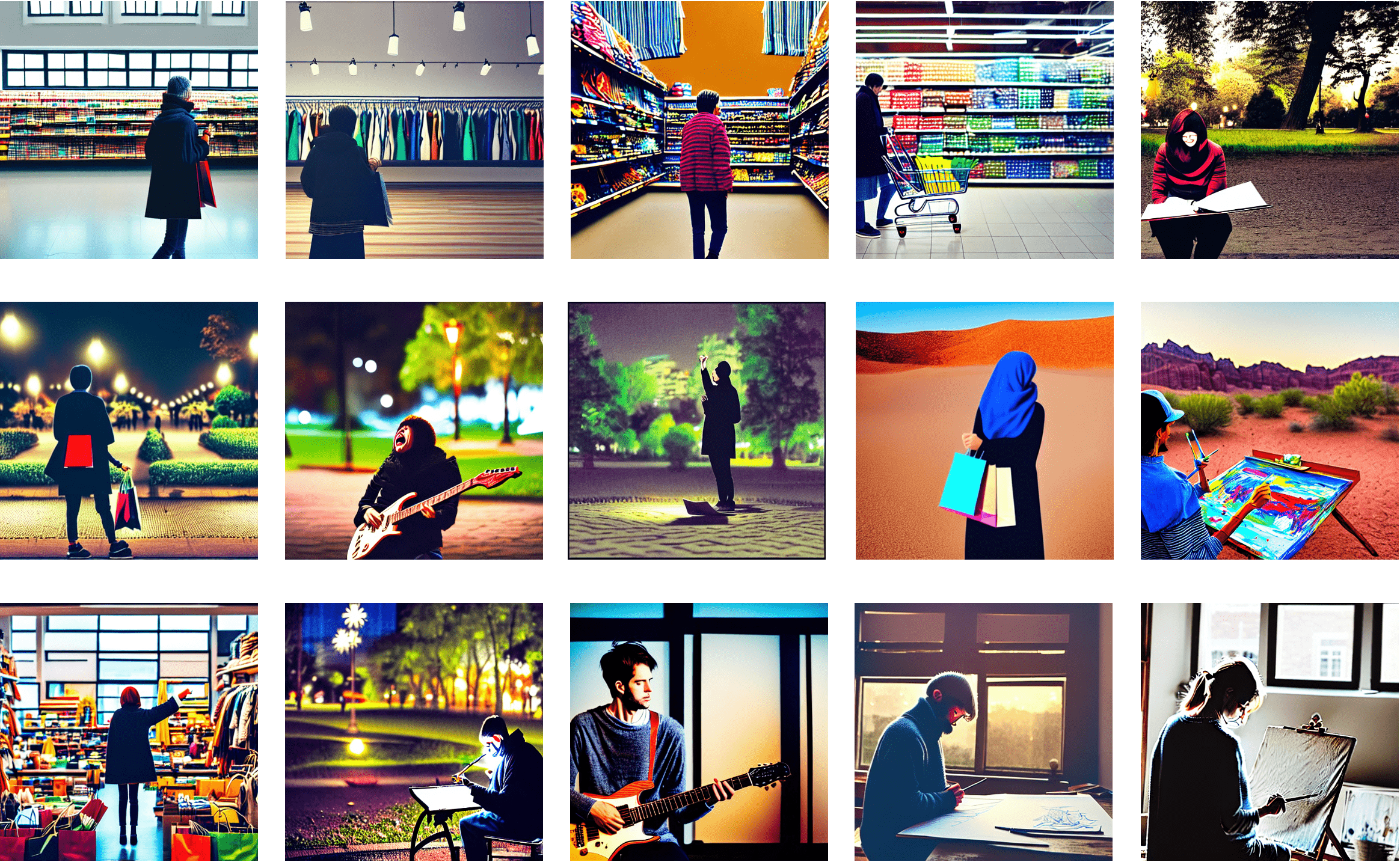}
  \caption{\texttt{RGD (Ours)} given the prompts from $D_r$}    
\end{figure}

\clearpage

\end{document}